
\documentclass[letterpaper, 10 pt, journal, twoside]{IEEEtran}

\usepackage{subcaption}
\usepackage{caption}
\usepackage{mathptmx} 
\usepackage{times} 
\usepackage{amsmath} 

\usepackage{amssymb}  
\usepackage{mathrsfs}                                  
\usepackage{graphics}
\usepackage{graphicx} 
\usepackage{caption}
\usepackage{float}
\usepackage{soul}
\usepackage{comment}
\usepackage{algorithmicx}
\usepackage{algpseudocode}
\usepackage[ruled,vlined]{algorithm2e}
\usepackage[english]{babel}
\usepackage{tikz}
\usepackage{ftnxtra}
\usepackage{color}
\DeclareMathAlphabet{\mathcal}{OMS}{cmsy}{m}{n}
\newtheorem{remark}{Remark}
\newtheorem{theorem}{Theorem}
\newtheorem{corollary}{Corollary}
\newtheorem{definition}{Definition}

\newtheorem{proof}{Proof}

\newif\ifdraft
\draftfalse



\title{
Convergence and Consistency Analysis for\\ A 3D Invariant-EKF SLAM 
}

\author{Teng Zhang, Kanzhi Wu, Jingwei Song, Shoudong Huang and Gamini Dissanayake
\thanks{The authors are with Center for Autonomous Systems, 
        University of Technology Sydney, Australia
        {\tt\small \{Teng.Zhang, Kanzhi.Wu, Jingwei.Song, Shoudong.Huang, Gamini.Dissanayake\}@uts.edu.au}}%
        }

\begin{document}

\maketitle

\begin{abstract}
In this paper, we investigate the convergence and consistency properties of an Invariant-Extended Kalman Filter (RI-EKF) based Simultaneous Localization and Mapping (SLAM) algorithm. Basic convergence properties of this algorithm are proven. These proofs do not require the restrictive assumption that the Jacobians of the motion and observation models need to be evaluated at the ground truth.
 It is also shown that the  output of RI-EKF is invariant under any \textit{stochastic rigid body transformation} in contrast to $\mathbb{SO}(3)$ based EKF SLAM algorithm ($\mathbb{SO}(3)$-EKF) that is only invariant under \textit{deterministic rigid body transformation}. Implications of these invariance properties on the consistency of the estimator are also discussed.  Monte Carlo simulation results demonstrate that RI-EKF outperforms $\mathbb{SO}(3)$-EKF, Robocentric-EKF and the ``First Estimates Jacobian" EKF, for 3D point feature based SLAM.
\end{abstract}
\begin{IEEEkeywords}
	Localization, Mapping, SLAM
\end{IEEEkeywords}

\section{INTRODUCTION}

\IEEEPARstart{E}{xtended} Kalman filter (EKF) has been used extensively in solving Simultaneous Localization and Mapping (SLAM) problem in the past.
However, a limitation of  the traditional EKF based point feature SLAM is the possible estimator inconsistency. Inconsistency here refers to the fact that the algorithm underestimates the uncertainty of the estimate leading to an overconfident result.  
This issue was  recognized as early as in 2001 \cite{Julier-2001} and then discussed in detail later in \cite{HuangT-RO2007}\cite{bailey2006consistency}. 
Some research to enhance the consistency of EKF SLAM is reported in the literature.  Robocentric EKF SLAM \cite{Castellanos04} estimates the location of landmarks in the robot local coordinate frame. As a result landmark positions to be estimated keep changing although landmarks are stationary in a fixed global coordinate frame. However, it has been shown that this robot-centric formulations lead to better performance in terms of estimator consistency. Guerreiro et al. \cite {GAS-SLAM-TRO2013}  also reported a Kalman filter for  SLAM problem  formulated in a robocentric coordinate frame. 
Besides, it was shown in \cite{GPHuang-consistency08} that the inconsistency in EKF SLAM is closely related to the partial observability of SLAM problem \cite{Andrade-Cetto-2004}\cite{Lee2006}. This insight resulted in a number of EKF SLAM algorithms which significantly improve consistency, such as the ``First Estimates Jacobian" EKF SLAM \cite{GPHuang-consistency08}, observability-constrained EKF SLAM \cite{GPHuang-consistencyIJRR}\cite{Hesch-consistencyTRO}. 

On the other hand, a number of authors have addressed the behaviour of EKF SLAM to examine the convergence properties and derive bounds for the uncertainty of the estimate. In 2001, Dissanayake et al. \cite{Dissa2001} proved three essential convergence properties of the algorithm under the assumption of linear motion and observation models, with theoretical achievable lower bounds on the resulting covariance matrix. In 2006, Mourikis and Roumeliotis \cite{Accuracy-SLAM-RSS2006} provided an analytical upper bound of the map uncertainty based on the observation noise level, the process noise level, and the size of the map. In 2007, Huang and Dissanayake \cite{HuangT-RO2007} extended the proof of the convergence properties and the achievable lower bounds on covariance matrix in \cite{Dissa2001} to the nonlinear case, but under a restrictive assumption that the Jacobians are evaluated at the ground truth.

  
Recently, Lie group representation for three-dimensional orientation/pose has become popular in SLAM. (e.g., \cite{forster2015imu}\cite{barfoot2016state}), which can achieve better convergence and accuracy for both filter based algorithms (e.g., \cite{mahony2008nonlinear}\cite{carlone2015quaternion}) and the optimization based algorithms (e.g., \cite{CoP}\cite{Udo-2013}). Besides,
the use of symmetry and Lie groups for observer design has gradually been recognized (e.g., \cite{Bonnabel2002}).
The combination of the symmetry-preserving theory and the extended Kalman filter  gives birth to the Invariant-EKF (I-EKF), which makes the traditional EKF possess the same invariance as the original system by using a geometrically adapted correction term. 
In \cite{Bonnabel2012}, the I-EKF methodology is firstly applied to EKF-SLAM. 
And then the Right Invariant Error EKF (called ``RI-EKF'' in this paper) for 2D SLAM is proposed in \cite{BarrauB15}, which also intrinsically uses the Lie group representation, and the improved consistency is proven  based on the linearized error-state model.

In this paper, we analyze the convergence  and consistency properties of RI-EKF for 3D case.  
 A convergence analysis for RI-EKF that does not require ``Jacobians evaluated at the ground truth'' assumption is presented. 
Furthermore, it is proven that the output of the filter is invariant under any  \textit{stochastic rigid body transformation} in contrast to $\mathbb{SO}(3)$ based EKF SLAM algorithm ($\mathbb{SO}(3)$-EKF) that is only invariant under \textit{deterministic rigid body transformation}. 
We also discuss the relationship between these invariance properties and consistency and show that 
these  properties have significant effect on the performance of the estimator via theoretical analysis and Monte Carlo simulations.

This paper is organized as follows. Section \ref{Section::PS} recalls the motion model and the observation model of SLAM problem in 3D. Section \ref{Section::REKF} provides the RI-EKF SLAM algorithm.  Section \ref{Section::Convergence} proves the convergence results of RI-EKF in two fundamental cases. Section \ref{Section::Consis} proves  
 the invariance property of RI-EKF and discusses the importance of this property. Section \ref{Section::Simulations}  demonstrates  RI-EKF  outperforms  $\mathbb{SO}(3)$-EKF, Robocentric-EKF  and ``First Estimates Jacobian" EKF SLAM algorithm  through Monte Carlo simulations. Finally, Section \ref{Section::Conclusion} outlines the main conclusions of this work.

{{
\textbf{Notations:} }} Throughout this paper bold lower-case and upper-case letters are reserved for vectors  and matrices/elements in manifold, respectively.
The notation ${S}(\cdot )$ is the skew symmetric operator that transforms a 3-dimensional vector into a skew symmetric matrix: ${S}(\mathbf{x})\mathbf{y}=\mathbf{x} \times \mathbf{y}$ for $\mathbf{x}, \mathbf{y} \in\mathbb{R}^3$, where the notation $\times$ refers to the cross product.

\section{Problem Statement}
\label{Section::PS}

The EKF SLAM algorithms focus on estimating the current robot pose and the positions of all the observed landmarks with the given motion model  and the observation model. In this work, SLAM problem in 3D scenarios is investigated and the state to be estimated is denoted by
\begin{equation}
\mathbf{X}=\left ( \mathbf{R}, \mathbf{p}, \mathbf{f}^1,\cdots, \mathbf{f}^N \right),
\label{eq::X}
\end{equation}
where $ \mathbf{R} \in \mathbb{SO}(3) $ and $ \mathbf{p} \in \mathbb{R}^{3}$ are the robot orientation and robot position, $ \mathbf{f}^{i} \in \mathbb{R}^{3}$ ($i=1,\cdots,N$) is the coordinate of the landmark $ i $,  all  described in the fixed world coordinate frame.

A general motion model for moving robot and static landmarks in 3D scenarios can be represented by 
\begin{equation}
\begin{aligned}
&  \mathbf{X}_{n+1}=f( \mathbf{X}_{n}, \mathbf{u}_{n}, \pmb{\epsilon}_{n} ) \\
=&   \left( \mathbf{R}_{n} \exp(\mathbf{w}_n+ \pmb{\epsilon}^w_\mathbf{n}), \mathbf{p}_n+ \mathbf{R}_n(\mathbf{v}_n + \pmb{\epsilon}^v_\mathbf{n} ),
\mathbf{f}^1_n,\cdots, \mathbf{f}^N_n \right),
\end{aligned}
\label{eq::RobotMotion}
\end{equation}
where  
$\mathbf{u}_n=\left[\begin{array}{cc}
\mathbf{w}^\intercal_n  & \mathbf{v}^\intercal_{n}
\end{array}    \right]^\intercal \in\mathbb{R}^6 $ is the odometry, being $ \mathbf{w}_{n} \in\mathbb{R}^3 $ and $ \mathbf{v}_{n} \in\mathbb{R}^3$ the angular increment  and linear translation from time $ n $ to time $ n+1 $,  $\exp(\cdot)$ is the exponential mapping of $\mathbb{SO}(3)$  defined in (\ref{eq::expS})   and  $\pmb{\epsilon}_n= \left[ \begin{array}{cc }
(\pmb{\epsilon}^w_\mathbf{n})^\intercal & (\pmb{\epsilon}^v_\mathbf{n})^\intercal  
\end{array}  \right]^\intercal \sim \mathcal{N}(\mathbf{0}, \pmb{\Phi}_n )$ is the odometry noise at time $n$.

As the robot is likely to observe different sets of landmarks in each time step, the notation $\text{O}_{n+1}$ is used to represent the set that indicates the landmarks observed at time $n+1$. Also by assuming a 3D sensor which provides the coordinate of landmark $ i $ in $ n+1 $-th robot frame, the observation model is given as follows
\begin{equation}
\mathbf{z}_{n+1}=h_{n+1}(\mathbf{X}_{n+1}, \pmb{\xi}_{n+1} ),
\label{eq::obsmodel}
\end{equation}
where $h_{n+1}(\mathbf{X}_{n+1} , \pmb{\xi}_{n+1})$ is a column vector obtained by stacking all entries $h^i (\mathbf{X}_{n+1} , \pmb{\xi}^i_{n+1})=\mathbf{R}_{n+1}^\intercal(\mathbf{f}^i_{n+1}-\mathbf{p}_{n+1}) +\pmb{\xi}^i_{n+1} \in\mathbb{R}^3 $ for all $i\in \text{O}_{n+1}$, $\pmb{\xi}_{n+1} \sim \mathcal{N}(\mathbf{0},\pmb{\Psi}_{n+1})  $ is the observation noise vector obtained by stacking all entries $\pmb{\xi}^i_{n+1} \sim \mathcal{N}(\mathbf{0},\pmb{\Psi}^i_{n+1})$ $(i\in \text{O}_{n+1})$. The covariance matrix $\pmb{\Psi}_{n+1}$ of observation noise  is a block diagonal matrix consisting of all $\pmb{\Psi}^i_{n+1}$ $(i\in \text{O}_{n+1})$.

\section{The Invariant EKF SLAM Algorithm}
\label{Section::REKF}

\begin{algorithm}[]
	\KwIn{$\hat{\mathbf{X}}_n$, $\mathbf{P}_n$, $\mathbf{u}_n$, $\mathbf{z}_{n+1}$;    }
	\KwOut{   $\hat{\mathbf{X}}_{n+1}$, $\mathbf{P}_{n+1}$;   }
	\textbf{Propagation:} \\
	$ \hat{\mathbf{X}}_{n+1|n}  \leftarrow   f( \mathbf{X}_{n}, \mathbf{u}_{n}, \mathbf{0} )$,  $\mathbf{P}_{n+1|n} \leftarrow \mathbf{F}_{n} \mathbf{P}_n  \mathbf{F}^\intercal_{n} + \mathbf{G}_n \Phi_n \mathbf{G}^\intercal_{n} $     \;
	\textbf{Update:} \\
	$ \mathbf{S}\leftarrow \mathbf{H}_{n+1} \mathbf{P}_{n+1|n}   \mathbf{H}_{n+1}^\intercal+\pmb{\Psi}_{n+1}$,   
	$\mathbf{K} \leftarrow \mathbf{P}_{n+1|n} \mathbf{H}_{n+1}^\intercal \mathbf{S}^{-1} $;\\ 
	$\mathbf{y} \leftarrow h_{n+1}(\hat{\mathbf{X}}_{n+1|n}, \mathbf{0} )-\mathbf{z}_{n+1} $; \\
	$\hat{\mathbf{X}}_{n+1} \leftarrow  \hat{\mathbf{X}}_{n+1|n} \oplus  \mathbf{ Ky}    $, ${\mathbf{P}}_{n+1} \leftarrow (\mathbf{I}-\mathbf{KH}_{n+1})\mathbf{P}_{n+1|n}$;\\
	(In RI-EKF, $(\mathbf{F}_n, \mathbf{G}_n, \mathbf{H}_n$) are given in (4) and (5))\\
	\caption{The general EKF framework (RI-EKF)}
	\label{Alg::G-EKF}
\end{algorithm}

In this section, RI-EKF based on the general EKF framework is briefly introduced. 
In the general EKF framework, the uncertainty of $\mathbf{X}$ is
described by $ \mathbf{X}=\hat{\mathbf{X}}\oplus \mathbf{e} $, where $\mathbf{e} \sim  \mathcal{N}(\mathbf{0}, \mathbf{P} ) $ is 
a white Gaussian noise vector and  $\hat{\mathbf{X}}$ is the mean estimate of $\mathbf{X}$. The  notation $\oplus$ is commonly called retraction in differentiable geometry \cite{absil2007trust} and it is  designed as a smooth mapping such that $\mathbf{X}=\mathbf{X}\oplus \mathbf{0}$ and there exists the inverse mapping $\ominus$ of $\oplus$: $\mathbf{e}=\mathbf{X}\ominus\hat{\mathbf{X}}$. 
The process of propagation and update based on the general EKF framework has been  summarized in  Alg. \ref{Alg::G-EKF}, which  is very similar to the standard EKF. Due to different uncertainty representation (compared to the standard EKF), the Jacobians of the general EKF framework in  Alg. \ref{Alg::G-EKF} are obtained by:
\ifdraft
\begin{equation}\label{eq::gekf}
\begin{aligned}
\mathbf{F}_n = & \left. \frac{\partial ( f ( \hat{\mathbf{X}}_n \oplus \mathbf{e}, \mathbf{u}_n, \mathbf{0}  ) \ominus  f ( \hat{\mathbf{X}}_n, \mathbf{u}_n, \mathbf{0}  ) ) }{\partial \mathbf{e}}\right\rvert_{\mathbf{e}=\mathbf{0} }\\
\mathbf{G}_n = & \left. \frac{\partial ( f ( \hat{\mathbf{X}}_n , \mathbf{u}_n, \epsilon  ) \ominus  f ( \hat{\mathbf{X}}_n, \mathbf{u}_n, \mathbf{0}  ) ) }{\partial \epsilon}\right\rvert_{\epsilon=\mathbf{0} }\\
\mathbf{H}_{n+1} = & \left. \frac{\partial  h ( \hat{\mathbf{X}}_{n+1|n} \oplus \mathbf{e}, \mathbf{0}  )  }{\partial \mathbf{e}}\right\rvert_{\mathbf{e}=\mathbf{0} } \\
\mathbf{J}_{n+1} = & \left. \frac{\partial (( \hat{\mathbf{X}}_{n+1|n} \oplus \mathbf{Ky}  ) \ominus ( \hat{\mathbf{X}}_{n+1|n} \oplus \mathbf{e} )) }{\partial \mathbf{e} }\right\rvert_{\mathbf{e}=\mathbf{Ky} }
\end{aligned}
\end{equation}
\else
$\mathbf{F}_n=\frac{\partial  f ( \hat{\mathbf{X}}_n \oplus \mathbf{e}, \mathbf{u}_n, \mathbf{0}  ) \ominus  f ( \hat{\mathbf{X}}_n, \mathbf{u}_n, \mathbf{0}  )  }{\partial \mathbf{e}}|_{\mathbf{e}=\mathbf{0} }$,   $\mathbf{G}_n=\frac{\partial  f ( \hat{\mathbf{X}}_n , \mathbf{u}_n, \epsilon  ) \ominus  f ( \hat{\mathbf{X}}_n, \mathbf{u}_n, \mathbf{0}  )  }{\partial \epsilon}|_{\epsilon=\mathbf{0} }$,  $\mathbf{H}_{n+1}=\frac{\partial  h_{n+1} ( \hat{\mathbf{X}}_{n+1|n} \oplus \mathbf{e}, \mathbf{0}  )  }{\partial \mathbf{e}}|_{\mathbf{e}=\mathbf{0} } $.
\fi

\subsection{RI-EKF}

RI-EKF follows the general EKF framework summarized in Alg. 1. The state space of RI-EKF is modeled  as a Lie group $ \mathcal{G}(N) $. 
The  background knowledge about Lie group $ \mathcal{G}(N)$  is provided in Appendix \ref{app::liegroup}.

\subsubsection{The choice of $\oplus$}
The retraction $\oplus$ of RI-EKF is   chosen such that 
$
\mathbf{X}=  \hat{\mathbf{X}} \oplus  \mathbf{e} : = \exp(\mathbf{e}) \hat{\mathbf{X}},
\label{eq::UncertaintyDefinition2}
$
where  $\exp$ is the exponential mapping on the Lie group $\mathcal{G}(N)$\footnote{The exponential mapping is an overloaded function for Lie group and hence we also denote $\exp$ as the exponential mapping  for the Lie group $\mathcal{G}(N)$ (given in (\ref{eq::exp})). {More details and the Matlab code of the algorithms are available at ``https://github.com/RomaTeng/EKF-SLAM-on-Manifold''.}}, $\mathbf{X}\in\mathcal{G}(N)$ is the actual pose and landmarks,  $\hat{\mathbf{X}}\in \mathcal{G}(N)   $  is  the \textit{mean} estimate   and the uncertainty vector $\mathbf{e}=  \left[  \begin{array} {ccccc}
\mathbf{e}_\theta ^\intercal & \mathbf{e}_p^\intercal & (\mathbf{e}^1)^\intercal & \cdots & (\mathbf{e}^N)^\intercal
\end{array}\right]^\intercal
\in \mathbf{R}^{3N+6} $ follows the  Gaussian distribution $\mathcal{N}(\mathbf{0},\mathbf{P})$. 

\subsubsection{Jacobian matrices}
\label{Section::Update}
The Jacobians of the  propagation step of RI-EKF are
\begin{equation}
\mathbf{F}_n =\mathbf{I}_{3N+6}, \text{ } 
\mathbf{G}_n = {ad}_{\hat{\mathbf{X}}_n}\mathbf{B}_n,
\label{eq::FNGNL}
\end{equation}
where $
\mathbf{B}_{n} = 
\left[
\begin{array}{cc}
-{J}_r(- \mathbf{w}_n) & \mathbf{0}_{3,3}  \\
-{S}(\mathbf{v}_n){J}_r(-\mathbf{w}_n) &  \mathbf{I}_3 \\
\mathbf{0}_{3N, 3}   &   \mathbf{0}_{3N, 3}
\end{array}
\right]$. The adjoint operation ${ad} $ and the right Jacobian ${J}_r (\cdot)  $  are given in Appendix A.
The Jacobian matrix $\mathbf{H}_{n+1}$ of the update step is obtained by stacking all matrices $\mathbf{H}^i_{n+1}$ for all $i\in \text{O}_{n+1} $, where
\begin{equation}
\mathbf{H}_{n+1}^i= \left[ 
\begin{array}{ccccc}
\mathbf{0}_{3, 3} &\hat{\mathbf{R}}^\intercal_{n+1|n} & \cdots & -\hat{\mathbf{R}}^\intercal_{n+1|n} & \mathbf{0}_{3, 3(N-i )}  
\end{array}
\right].
\end{equation}
For a  general observation model that is  a function of the relative position of the landmark,
the Jacobian matrix $\mathbf{H}_{n+1}$ can be calculated by  the chain rule.

\subsubsection{Landmark initialization}\label{subsection::NewLandmark}
Here we provide the method to augment the state $\mathbf{\mathbf{X}}\in\mathcal{G}(N)$ and adjust the covariance matrix $\mathbf{P}$ when the robot observes a new landmark with the observation $\mathbf{z} \in \mathbf{R}^3 $.
For brevity,  the mathematical derivation is ignored here and the process to augment the state is summarized in Alg. \ref{Alg::Aug}, where
$
\mathbf{M}_{N}:=
\left[
\begin{array}{ccc}
\mathbf{0}_{3, 3}&\mathbf{I}_3& \mathbf{0}_{3, 3N}
\end{array}
\right]^{\intercal}
\label{eq::MN}
$
and $\Psi$ is the  covariance matrix representing the noise level  in  the new landmark observation.
\begin{algorithm}[]
	\KwIn{$\hat{\mathbf{X}}$, $\mathbf{P}$, $\mathbf{z}$;    }
	\KwOut{   $\hat{\mathbf{X}}_{new}$, $\mathbf{P}_{new}$;   }
	\textbf{Process:} \\
	$\hat{\mathbf{f}}^{N+1}= \hat{\mathbf{p}}+\hat{\mathbf{R}} \mathbf{z} \in \mathbb{R}^3$\\
	$\hat{\mathbf{X}}_{new} \leftarrow (\hat{\mathbf{X}},\hat{\mathbf{f}}^{N+1}) \in \mathcal{G}(N+1)$;\\
	$\mathbf{P}_{new} \leftarrow
	\left[
	\begin{array}{cc}
		\mathbf{P} & \mathbf{P}\mathbf{M}_N    \\
		\mathbf{M}_N^\intercal \mathbf{P} & \hat{\mathbf{R}} \Psi \hat{\mathbf{R}}^\intercal + \mathbf{M}_{N}^\intercal \mathbf{P} \mathbf{M}_{N}
	\end{array}
	\right]$.
	\caption{Landmark Initialization of RI-EKF}
	\label{Alg::Aug}
		\vspace*{-1mm}
\end{algorithm}

{{
\subsection{Discussion}
The general EKF framework proposed in \cite{BarrauB15} allows more flexible uncertainty representation, compared to the standard EKF. 
From Alg. 1, one can see that \textbf{a general EKF framework based filter} can be designed via \textbf{a choice of} \textbf{retraction} $\oplus$.
For example, the retraction $\oplus$ used in the 2D traditional EKF SLAM algorithm  is the standard addition.
A natural extension of the 2D traditional EKF SLAM algorithm is $\mathbb{SO}(3)$-EKF, in which the state space is modeled as 
$\mathbb{SO}(3) \times \mathbb{R}^{3+3N}$ and
the retraction $\oplus$ is
$ \mathbf{X}=\hat{\mathbf{X}} \oplus  \mathbf{e} = (\exp(\mathbf{e}_\theta)\hat{\mathbf{R}}, \mathbf{e}_p+\hat{\mathbf{p}}, \mathbf{e}^1+\hat{\mathbf{f}}^1, \cdots,  \mathbf{e}^N+\hat{\mathbf{f}}^N )$. 
Similarly, $\mathbb{SE}(3)$-EKF can be obtained by modeling  the state space  as $\mathbb{SE}(3) \times \mathbb{R}^{3N}$. 

Another noticeable point is that
two  general EKF framework based filters with different choice of $\oplus$ may have the same Jacobians ($\mathbf{F}_n,\mathbf{G}_n,\mathbf{H}_n$). For example, 
if the  retraction $\oplus$ of RI-EKF is changed such that $ \hat{\mathbf{X}} \oplus \mathbf{e}= (\exp(\mathbf{e}_\theta)\hat{\mathbf{R}}, \exp(\mathbf{e}_\theta)\hat{\mathbf{p}}+ \mathbf{e}_p, \cdots, \exp(\mathbf{e}_\theta)\hat{\mathbf{f}}^N+ \mathbf{e}^N  )  $, the resulting filter (Pseudo-RI-EKF) has the same Jacobians as that of RI-EKF but their performances are significantly different as shown in Section VI, showing that  the choice of retraction $\oplus$ has significant effect on the behavior of the general EKF framework based filter. In the next section, we will
compare the behavior of RI-EKF  with $\mathbb{SO}(3)$-EKF via the theoretical proofs for 
the convergence property of RI-EKF and two simple examples.  
}}

\section{Convergence Analysis of RI-EKF SLAM algorithm}
\label{Section::Convergence}	

The general expression for the covariance matrices evolution of RI-EKF cannot be easily obtained. Therefore, two representative scenarios are considered: (i) the robot is stationary, and (ii) the robot then moves one step. The convergence results of RI-EKF SLAM algorithm in the two scenarios are presented and proven, under the condition that Jacobians are  evaluated at the latest estimate. Hence the results are significant extension to   similar theorems in \cite{HuangT-RO2007} where Jacobians evaluated at the true state are assumed to be available. 

The general setting analyzed in the following subsections is as follows. 
The robot starts at point A with the initial condition $(\hat{\mathbf{X}}_0, \mathbf{P} ) $, where 
$\mathbf{P}$ is covariance matrix and 
  $\hat{\mathbf{X}}_0 = (\hat{\mathbf{R}},\hat{\mathbf{p}},\hat{\mathbf{f}}^1,\cdots,\hat{\mathbf{f}}^N)$ ($N$ landmarks have been observed). The covariance matrix of odometry noise is $\pmb{\Phi}$ and the covariance matrix of one landmark observation noise  is $\Psi$.
In the following subsections, $
\mathbf{M}_{N}:=
\left[
\begin{array}{ccc}
\mathbf{0}_{3, 3}&\mathbf{I}_3& \mathbf{0}_{3, 3N}
\end{array}
\right]^{\intercal}
$, 	$ \mathbf{L}:= \mathbf{P}\mathbf{M}_N $ and $ \mathbf{W}:=\mathbf{M}_N^\intercal \mathbf{P}\mathbf{M}_N   $. The odometry and the covariance  of odometry noise are $\mathbf{0}_{6,1}$ and $\mathbf{0}_{6,6}$, respectively when robot remains stationary.

\subsection{Scenario A: Robot remains stationary}
\label{section::SA}

\begin{theorem}
	If the robot remains stationary at point A and does not  observe any of the previously seen landmarks  but observes 
	a new landmark for $k$ times, the \textit{mean} estimate of robot pose and the $N$ landmarks and their related uncertainty remain unchanged (via $k$ times process of propagation and update of RI-EKF). The covariance matrix of the state when the new landmark is integrated becomes 
	$
	\mathbf{P}_k= 
	\left[
	\begin{array}{cc}
	\mathbf{P} &   \mathbf{L}    \\
	\mathbf{L}^\intercal & \frac{\hat{\mathbf{R}} \pmb{\Psi} \hat{\mathbf{R}}^\intercal}{k} + \mathbf{W} 
	\end{array}
	\right]
	$.
	When $k\rightarrow \infty$, the covariance matrix  becomes
	\begin{equation}
	\mathbf{P}^A_{\infty}= 
	\left[
	\begin{array}{cc}
	\mathbf{P} &   \mathbf{L} \\
	\mathbf{L}^\intercal  &  \mathbf{W}
	\end{array}
	\right].
	\label{eq::PAend}
	\end{equation}
\end{theorem}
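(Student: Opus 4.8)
The plan is to track the two operations performed while the robot is stationary — propagation and update — and to induct on the number of observations $k$ of the new landmark. During a stationary step the odometry and its noise covariance $\pmb{\Phi}$ vanish, and by (\ref{eq::FNGNL}) the propagation Jacobian is $\mathbf{F}_n=\mathbf{I}_{3N+6}$; since $f(\hat{\mathbf{X}}_n,\mathbf{0},\mathbf{0})=\hat{\mathbf{X}}_n$ and $\mathbf{G}_n\pmb{\Phi}\mathbf{G}_n^\intercal=\mathbf{0}$, we get $\hat{\mathbf{X}}_{n+1|n}=\hat{\mathbf{X}}_n$ and $\mathbf{P}_{n+1|n}=\mathbf{P}_n$. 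Hence propagation is inert, and in particular $\hat{\mathbf{R}}$ never changes, so the same $\hat{\mathbf{R}}$ enters every observation Jacobian. It therefore suffices to analyze the sequence of updates of the single new landmark, whose first occurrence is its initialization by Alg.~\ref{Alg::Aug}. This is the base case $k=1$: the output of Alg.~\ref{Alg::Aug} is exactly $\mathbf{P}_1$, whose bottom-right block $\hat{\mathbf{R}}\pmb{\Psi}\hat{\mathbf{R}}^\intercal+\mathbf{W}$ equals $\tfrac{1}{1}\hat{\mathbf{R}}\pmb{\Psi}\hat{\mathbf{R}}^\intercal+\mathbf{W}$.

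For the inductive step, assume after $k$ observations the covariance has the asserted form $\mathbf{P}_k=\bigl[\begin{smallmatrix}\mathbf{P}&\mathbf{L}\\ \mathbf{L}^\intercal&\mathbf{C}_k\end{smallmatrix}\bigr]$ with $\mathbf{C}_k=\tfrac1k\hat{\mathbf{R}}\pmb{\Psi}\hat{\mathbf{R}}^\intercal+\mathbf{W}$. In the augmented coordinates the new-landmark observation Jacobian factors as $\mathbf{H}=\hat{\mathbf{R}}^\intercal[\,\mathbf{M}_N^\intercal\;\; -\mathbf{I}_3\,]$. The crucial cancellation occurs in $\mathbf{H}\mathbf{P}_k$: its first block is $\hat{\mathbf{R}}^\intercal(\mathbf{M}_N^\intercal\mathbf{P}-\mathbf{L}^\intercal)=\mathbf{0}$ because $\mathbf{M}_N^\intercal\mathbf{P}=\mathbf{L}^\intercal$, while its second block reduces to $-\tfrac1k\pmb{\Psi}\hat{\mathbf{R}}^\intercal$ using $\mathbf{M}_N^\intercal\mathbf{L}=\mathbf{W}$ and $\mathbf{W}-\mathbf{C}_k=-\tfrac1k\hat{\mathbf{R}}\pmb{\Psi}\hat{\mathbf{R}}^\intercal$. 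Consequently $\mathbf{S}=\mathbf{H}\mathbf{P}_k\mathbf{H}^\intercal+\pmb{\Psi}=\tfrac{k+1}{k}\pmb{\Psi}$, and the gain $\mathbf{K}=\mathbf{P}_k\mathbf{H}^\intercal\mathbf{S}^{-1}$ vanishes except on the new-landmark rows, where it equals $-\tfrac{1}{k+1}\hat{\mathbf{R}}$.

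The mean claim then follows from the group structure: since $\mathbf{Ky}$ has zero orientation, position and old-landmark components, the retraction $\hat{\mathbf{X}}\oplus\mathbf{Ky}=\exp(\mathbf{Ky})\hat{\mathbf{X}}$ leaves $\hat{\mathbf{R}}$, $\hat{\mathbf{p}}$ and $\hat{\mathbf{f}}^1,\dots,\hat{\mathbf{f}}^N$ unchanged; with $\mathbf{e}_\theta=\mathbf{0}$ the exponential of $\mathcal{G}(N{+}1)$ (Appendix~A) acts as a pure additive translation on the position/landmark columns, so only $\hat{\mathbf{f}}^{N+1}$ moves. For the covariance, $\mathbf{I}-\mathbf{KH}$ is block-lower-triangular with identity on the original $(3N+6)$ block; multiplying out $\mathbf{P}_{k+1}=(\mathbf{I}-\mathbf{KH})\mathbf{P}_k$ — again invoking $\mathbf{M}_N^\intercal\mathbf{P}=\mathbf{L}^\intercal$ and $\mathbf{M}_N^\intercal\mathbf{L}=\mathbf{W}$ — preserves the blocks $\mathbf{P}$ and $\mathbf{L}$ and produces bottom-right block $\tfrac{1}{k+1}\hat{\mathbf{R}}\pmb{\Psi}\hat{\mathbf{R}}^\intercal+\mathbf{W}=\mathbf{C}_{k+1}$, closing the induction. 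Letting $k\to\infty$ sends $\tfrac1k\hat{\mathbf{R}}\pmb{\Psi}\hat{\mathbf{R}}^\intercal\to\mathbf{0}$ and yields (\ref{eq::PAend}).

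The step I expect to be the main obstacle is the mean argument: I must verify from the explicit exponential map of $\mathcal{G}(N)$ in Appendix~A that an increment with vanishing rotation part $\mathbf{e}_\theta=\mathbf{0}$ really acts as a pure additive shift on each position/landmark column, which is what keeps the pose and the old $N$ landmarks inert while only the newly integrated landmark is corrected. The remainder is bookkeeping — confirming that the special block form of $\mathbf{P}_k$ is reproduced after every update so the induction hypothesis can be reapplied — and there the two identities $\mathbf{M}_N^\intercal\mathbf{P}=\mathbf{L}^\intercal$ and $\mathbf{M}_N^\intercal\mathbf{L}=\mathbf{W}$ do all the heavy lifting.
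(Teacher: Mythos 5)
Your proposal is correct and follows essentially the same route as the paper's own proof: induction on $k$ with the base case given by the landmark-initialization step of Alg.~\ref{Alg::Aug}, an inert propagation step, and an explicit computation of $\mathbf{S}=\tfrac{k+1}{k}\pmb{\Psi}$ and of the gain $\mathbf{K}$ (zero except on the new-landmark rows, where it equals $-\tfrac{1}{k+1}\hat{\mathbf{R}}$), from which the mean and covariance claims follow. Your version merely spells out more of the block algebra (the factorization $\mathbf{H}=\hat{\mathbf{R}}^\intercal[\,\mathbf{M}_N^\intercal\;\;-\mathbf{I}_3\,]$ and the identities $\mathbf{M}_N^\intercal\mathbf{P}=\mathbf{L}^\intercal$, $\mathbf{M}_N^\intercal\mathbf{L}=\mathbf{W}$) and the verification that an increment with $\mathbf{e}_\theta=\mathbf{0}$ acts additively under the retraction, both of which check out against (\ref{eq::exp}).
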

\begin{proof}
	See Appendix \ref{app::theorem2}.
\end{proof}

\ifdraft
\begin{figure}[htbp]
	\centering
	\includegraphics[width=0.8\linewidth]{./figures/static-kzwu-small.pdf}
	\caption{ \textbf{Convergence Property Illustrated On An Example for Scenario A.}
		In this case, robot is stationary and always only see one new landmark. 
		Left: The euler angle difference w.r.t. the groudtruth from RI-EKF and T-EKF estimates. Right:  Diagonal elements of $\mathbf{P}_{\theta}$ from RI-EKF and T-EKF.}
\end{figure}
\else
\begin{figure}[htbp]
	\centering
	\includegraphics[width=0.95\linewidth]{./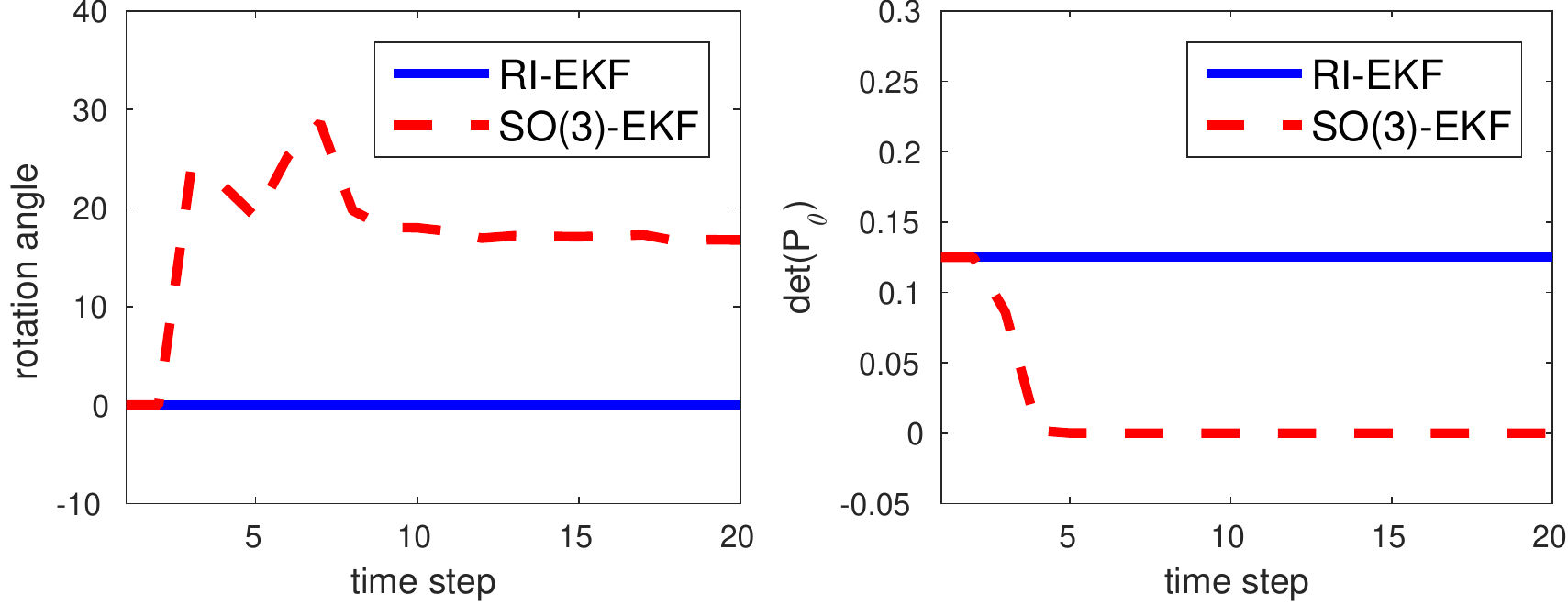}
	\caption{ \textbf{Illustration of Theorem 1}.
		In this case, robot is stationary and always only observes the ``new" landmark. 
		 Left: The error (unit: degree) in robot orientation w.r.t. the ground truth as estimated by  RI-EKF and $\mathbb{SO}(3)$-EKF.  Right:  $\det ( \mathbf{P}_{\theta} )$ estimated by RI-EKF and $\mathbb{SO}(3)$-EKF. 
		  }
	\label{fig::circ}
	\vspace*{-2mm}
\end{figure}
\fi

{ {
 Theorem 1 can be interpreted as that the covariance matrix (w.r.t. robot pose) in RI-EKF will not be reduced by observing the ``new" landmark when robot keeps stationary, which corresponds to a fact that the observations to previously unseen landmark do not convey any new information on the location of the robot. However, 
  $\mathbb{SO}(3)$-EKF does not have this good convergence property.
}}

We illustrate the results of Theorem 1 using the following scenario. The simulated robot remains stationary  and always observes the ``new" landmark (the covariance of observation noise is not zero ). The ``new" landmark is  observed multiple times (a small loop closure) and the standard  deviation of  observation noise is set as $5\%$ of robot-to-landmark distance along each
axis. The initial covariance matrix $\mathbf{P}_{\theta} \in   \mathbb{R}^{3\times3 }$   of robot orientation    is set as $\frac{1}{2}\mathbf{I}_3$. Fig. \ref{fig::circ} presents results of a simulation of this scenario. 
 The rotation angle relative to the initial orientation and $\det(\mathbf{P}_{\theta})$ from RI-EKF correctly infers that  the robot  remains stationary and the orientation uncertainty remains unchanged. In contrast, $\mathbb{SO}(3)$-EKF updates the robot orientation and furthermore predicts that the orientation uncertainty decreases as observations continue to be made, both of which are clearly erroneous and therefore leads to estimator inconsistency.

Theorem 1 can be easily extended to  a multiple landmarks scenario. 
\begin{corollary}
	If the robot is stationary at point A  and only observes $m$ new landmark $k$ times, the estimate of pose from RI-EKF does not change while the covariance matrix of the estimate becomes 
$
		\mathbf{P}_k = 
		\left[
		\begin{array}{ccccc}
		\mathbf{P} &   \mathbf{L}  &  \mathbf{L} & \cdots &  \mathbf{L}  \\
		\mathbf{L}^\intercal &    \mathbf{Q}_k & \mathbf{W} & \cdots & \mathbf{W} \\
		\mathbf{L}^\intercal&   \mathbf{W}  &  \mathbf{Q}_k  & \ddots & \vdots  \\
		\vdots & \vdots  &  \ddots  & \ddots   & \mathbf{W} \\
		\mathbf{L}^\intercal    & \mathbf{W} & \cdots  & \mathbf{W} &  \mathbf{Q}_k 
		\end{array}
		\right]
$,
	where 
	$\mathbf{Q}_k=\frac{\hat{\mathbf{R}} \pmb{\Psi} \hat{\mathbf{R}}^\intercal}{k} + \mathbf{W}$.  When $k \rightarrow \infty$, the covariance matrix becomes
	\begin{equation}
\mathbf{P}^A_{\infty}= 
\left[
\begin{array}{ccccc}
\mathbf{P} &   \mathbf{L}  &  \mathbf{L} & \cdots &  \mathbf{L}  \\
\mathbf{L}^\intercal &    \mathbf{W} & \mathbf{W} & \cdots & \mathbf{W} \\
\mathbf{L}^\intercal&   \mathbf{W}  &  \mathbf{W}  & \ddots & \vdots  \\
\vdots & \vdots  &  \ddots  & \ddots   & \mathbf{W} \\
\mathbf{L}^\intercal    & \mathbf{W} & \cdots  & \mathbf{W} &  \mathbf{W} 
\end{array}
\right].
\label{eq::MultipleAdd}
	\end{equation}
\end{corollary}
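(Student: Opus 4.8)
The plan is to reduce the $m$-landmark problem to exactly the machinery behind Theorem 1 and then close an induction on the number of observation rounds. As in the single-landmark case the propagation step is inert: with zero odometry and $\pmb{\Phi}=\mathbf{0}$ one has $\mathbf{F}_n=\mathbf{I}$ and $\mathbf{G}_n\pmb{\Phi}\mathbf{G}_n^\intercal=\mathbf{0}$, so $\mathbf{P}_{n+1|n}=\mathbf{P}_n$ and the predicted orientation $\hat{\mathbf{R}}_{n+1|n}=\hat{\mathbf{R}}$ (hence every Jacobian) stays constant across all steps. The base case $k=1$ is obtained by applying the initialization rule of Alg.~\ref{Alg::Aug} once per new landmark: because $\mathbf{M}_N$ always selects the robot-position block, each freshly added landmark inherits the cross-block $\mathbf{L}=\mathbf{P}\mathbf{M}_N$ with the old state, the self-block $\hat{\mathbf{R}}\pmb{\Psi}\hat{\mathbf{R}}^\intercal+\mathbf{W}=\mathbf{Q}_1$, and the cross-block $\mathbf{W}=\mathbf{M}_N^\intercal\mathbf{P}\mathbf{M}_N$ with every other new landmark, which is precisely the claimed $\mathbf{P}_k$ at $k=1$.

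For the inductive step I assume $\mathbf{P}_k$ has the stated block form and process one more round of observations of all $m$ landmarks. Write $\mathbf{e}_0$ for the $(3N+6)$-dimensional error of the robot pose and the $N$ old landmarks, and $\mathbf{e}^{N+j}$ for the error of new landmark $j$; then the stacked Jacobian acts as $\mathbf{H}^{N+j}\mathbf{e}=\hat{\mathbf{R}}^\intercal(\mathbf{e}_p-\mathbf{e}^{N+j})$. The apparent obstacle here, and the step that makes the extension look harder than the scalar case, is the $3m\times 3m$ inversion of the innovation covariance $\mathbf{S}$. The key observation that defuses it is that, under the block form of $\mathbf{P}_k$, the residuals of distinct new landmarks are uncorrelated: $\operatorname{Cov}(\mathbf{e}_p-\mathbf{e}^{N+j},\,\mathbf{e}_p-\mathbf{e}^{N+j'})$ collapses to $\mathbf{0}$ for $j\neq j'$ (all four constituent covariances equal $\mathbf{W}$) and to $\tfrac{1}{k}\hat{\mathbf{R}}\pmb{\Psi}\hat{\mathbf{R}}^\intercal$ for $j=j'$. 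Hence $\mathbf{H}\mathbf{P}_k\mathbf{H}^\intercal=\tfrac{1}{k}(\mathbf{I}_m\otimes\pmb{\Psi})$ and $\mathbf{S}=\tfrac{k+1}{k}(\mathbf{I}_m\otimes\pmb{\Psi})$ is block diagonal and trivially invertible, so no genuine large inversion is ever needed.

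With $\mathbf{S}^{-1}=\tfrac{k}{k+1}(\mathbf{I}_m\otimes\pmb{\Psi}^{-1})$ the rest is bookkeeping. A short computation shows $\mathbf{P}_k\mathbf{H}^\intercal$ vanishes on the $\mathbf{e}_0$-rows, since $\operatorname{Cov}(\mathbf{e}_0,\mathbf{e}_p-\mathbf{e}^{N+j})=(\mathbf{P}\mathbf{M}_N-\mathbf{L})\hat{\mathbf{R}}=\mathbf{0}$, and on all off-diagonal landmark blocks, leaving only the self-gain $\mathbf{K}^{N+j}=-\tfrac{1}{k+1}\hat{\mathbf{R}}$. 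Consequently $(\mathbf{I}-\mathbf{K}\mathbf{H})\mathbf{P}_k$ leaves $\mathbf{P}$, every $\mathbf{L}$, and every off-diagonal $\mathbf{W}$ untouched, while each diagonal landmark block loses $\tfrac{1}{k(k+1)}\hat{\mathbf{R}}\pmb{\Psi}\hat{\mathbf{R}}^\intercal$, turning $\mathbf{Q}_k$ into $\mathbf{Q}_{k+1}=\tfrac{1}{k+1}\hat{\mathbf{R}}\pmb{\Psi}\hat{\mathbf{R}}^\intercal+\mathbf{W}$, which closes the induction. Because the $\mathbf{e}_0$-rows of $\mathbf{K}$ are zero, the correction $\mathbf{K}\mathbf{y}$ has vanishing orientation and position components, so the retraction $\oplus$ leaves the robot pose and the $N$ old landmarks unchanged, giving the mean claim; and letting $k\to\infty$ sends $\mathbf{Q}_k\to\mathbf{W}$, yielding (\ref{eq::MultipleAdd}). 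The only thing to be careful about is the bookkeeping that the cross-terms between distinct new landmarks genuinely stay at $\mathbf{W}$, which is what the vanishing of the off-diagonal blocks of $\mathbf{K}\mathbf{H}\mathbf{P}_k$ guarantees.
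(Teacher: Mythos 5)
Your proof is correct and follows essentially the same route as the paper: the paper proves Theorem 1 by induction on $k$ with an explicit computation of $\mathbf{S}$ and $\mathbf{K}$, and merely asserts that the corollary is an "easy extension," which is exactly the induction you carry out. The one genuinely new ingredient you supply --- that under the assumed block structure the cross-covariances $\mathbf{W}$ cancel so that $\mathbf{H}\mathbf{P}_k\mathbf{H}^\intercal$ is block diagonal and the gain only touches each new landmark's own block --- is the right (and correctly executed) bookkeeping needed to make the paper's claim rigorous.
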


\subsection{Scenario B: Robot takes a step after a stationary period}
Consider the condition that the robot moves one step after being  stationary for a long period of time while observing new landmarks.
\begin{theorem}
	Assume $\Psi=\phi \mathbf{I}_3$ ($\phi\in\mathbb{R}^+$).
	If   the robot remains stationary at point A, does not observe any of the previously seen landmarks but observes 
	$m$ new	landmarks for $k=\infty$ times and then takes a step  to B using control action $\mathbf{u}=\left[ \begin{array}{cc}
	\mathbf{w}^\intercal & \mathbf{v}^\intercal  
	\end{array} \right]^\intercal$ and observes the same set of landmarks  $l$ times, then the covariance matrix from RI-EKF becomes
	$ \mathbf{P}^B_{l}= \mathbf{P}^A_{\infty} + \bar{\mathbf{P}}^B_l $, where $\mathbf{P}^A_{\infty}$ is given in (\ref{eq::MultipleAdd}),  $\bar{\pmb{\Psi}}=\phi \mathbf{I}_{3m}$  and 
	\begin{equation}
	\bar{\mathbf{P}}^B_{l}= \rm \textit{ad}_{\hat{\mathbf{X}}_A} \mathbf{E} ( \tilde{\pmb{\Phi}}^{-1}+\textit{l}\tilde{\mathbf{H}}^\intercal \bar{\pmb{\Psi}}^{-1}    \tilde{\mathbf{H}}   ) ^{-1} \mathbf{E}^\intercal \rm \textit{ad}_{\hat{\mathbf{X}}_A}^\intercal,
	\label{eq::pbl2}
	\end{equation}
	where $ \bar{\pmb
		{\Psi}}= \phi \mathbf{I}_{3m} $ and the covariance matrix of the odometry noise is $\pmb{\Phi}$.
	In  (\ref{eq::pbl2}), $\hat{\mathbf{X}}_A$ is the estimated state at the point A, $\tilde{\pmb{\Phi}}=   \mathbf{B} \pmb{\Phi} \mathbf{B}^\intercal  $ is a positive definite matrix and  
	\begin{equation}
	\begin{aligned}
	\mathbf{B} =&
	\left[
	\begin{array}{cc}
	- {J}_r(- \mathbf{w}) & \mathbf{0}_{3, 3}  \\
	- S(\mathbf{\mathbf{v}}) {J}_r(-\mathbf{w}) &  \mathbf{I}_3 \\
	\end{array}
	\right],\\
	\mathbf{E}=& \left[
	\begin{array}{c}
	\mathbf{I}_6\\
	\mathbf{0}_{3(N+m), 6}
	\end{array}
	\right],\  \tilde{\mathbf{H}}= \mathbf{H} {ad}_{\hat{\mathbf{X}}_A} \mathbf{E},  
	\end{aligned}
	\end{equation}
	where 
	$\mathbf{H}$ is obtained by stacking all matrices $\mathbf{H}^i=
	\left[      \begin{array}{ccccc}
	\mathbf{0}_{3, 3} &  \mathbf{I}_3 &  \mathbf{0}_{3, 3(N+i-1)}&  -\mathbf{I}_3  &  \mathbf{0}_{3, 3(m-i)}
	\end{array}
	\right ] $.
	When $l$ tends to infinity, the covariance matrix becomes $\mathbf{P}^{B}_{\infty}= \mathbf{P}^A_{\infty}$ under the condition that there are three landmarks  non-coplanar with the robot position. 	
\end{theorem}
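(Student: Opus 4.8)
The plan is to take the explicit expression for $\bar{\mathbf{P}}^B_l$ in (\ref{eq::pbl2}) as given and show that it vanishes as $l\to\infty$, so that $\mathbf{P}^B_l=\mathbf{P}^A_{\infty}+\bar{\mathbf{P}}^B_l\to\mathbf{P}^A_{\infty}$. Since the outer factors ${ad}_{\hat{\mathbf{X}}_A}\mathbf{E}$ and $\mathbf{E}^\intercal{ad}_{\hat{\mathbf{X}}_A}^\intercal$ in (\ref{eq::pbl2}) do not depend on $l$, it suffices to prove that the $6\times 6$ central factor $\mathbf{C}_l:=(\tilde{\pmb{\Phi}}^{-1}+l\,\tilde{\mathbf{H}}^\intercal\bar{\pmb{\Psi}}^{-1}\tilde{\mathbf{H}})^{-1}$ tends to the zero matrix.

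First I would treat the limit abstractly. Because $\tilde{\pmb{\Phi}}$ is positive definite we have $\tilde{\pmb{\Phi}}^{-1}\succ\mathbf{0}$, and because $\bar{\pmb{\Psi}}=\phi\mathbf{I}_{3m}$ with $\phi>0$ we have $\bar{\pmb{\Psi}}^{-1}=\phi^{-1}\mathbf{I}_{3m}\succ\mathbf{0}$; hence $\mathbf{C}_l^{-1}=\tilde{\pmb{\Phi}}^{-1}+\phi^{-1}l\,\tilde{\mathbf{H}}^\intercal\tilde{\mathbf{H}}$ is symmetric positive definite and nondecreasing in $l$ in the Loewner order. If $\tilde{\mathbf{H}}$ has full column rank $6$, then $\tilde{\mathbf{H}}^\intercal\tilde{\mathbf{H}}\succ\mathbf{0}$ with smallest eigenvalue $\mu>0$, so by Weyl's inequality $\lambda_{\min}(\mathbf{C}_l^{-1})\ge\phi^{-1}l\mu\to\infty$; consequently $\|\mathbf{C}_l\|=\lambda_{\min}(\mathbf{C}_l^{-1})^{-1}\to 0$ and $\mathbf{C}_l\to\mathbf{0}$. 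Thus the whole statement reduces to verifying that $\tilde{\mathbf{H}}=\mathbf{H}\,{ad}_{\hat{\mathbf{X}}_A}\mathbf{E}$ has full column rank under the non-coplanarity hypothesis.

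This rank computation is the heart of the argument. Using the block structure of the adjoint ${ad}_{\hat{\mathbf{X}}_A}$ from Appendix A together with the fact that $\mathbf{E}$ extracts its first six columns (the rotation and robot-translation directions), I would evaluate each block $\tilde{\mathbf{H}}^i=\mathbf{H}^i\,{ad}_{\hat{\mathbf{X}}_A}\mathbf{E}$ and obtain $\tilde{\mathbf{H}}^i=\bigl[\,S(\hat{\mathbf{p}}-\hat{\mathbf{f}}^{N+i})\hat{\mathbf{R}}\ \ \hat{\mathbf{R}}\,\bigr]$, where the robot-position row contributes the invertible block $\hat{\mathbf{R}}$ in the translation columns while the landmark rows couple only to the rotation directions. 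Stacking over $i=1,\dots,m$ and writing a null-space vector as $(\mathbf{a}^\intercal,\mathbf{b}^\intercal)^\intercal$, the condition $\tilde{\mathbf{H}}(\mathbf{a}^\intercal,\mathbf{b}^\intercal)^\intercal=\mathbf{0}$ becomes $(\hat{\mathbf{f}}^{N+i}-\hat{\mathbf{p}})\times(\hat{\mathbf{R}}\mathbf{a})=\hat{\mathbf{R}}\mathbf{b}$ for every $i$. Differencing two such equations gives $(\hat{\mathbf{f}}^{N+i}-\hat{\mathbf{f}}^{N+j})\times(\hat{\mathbf{R}}\mathbf{a})=\mathbf{0}$, so $\hat{\mathbf{R}}\mathbf{a}$ must be parallel to every inter-landmark vector. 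When three landmarks are non-coplanar with the robot, the vectors $\hat{\mathbf{f}}^{N+i}-\hat{\mathbf{p}}$ are linearly independent, so two of the inter-landmark differences are themselves linearly independent; a single $\hat{\mathbf{R}}\mathbf{a}$ cannot be parallel to both unless $\hat{\mathbf{R}}\mathbf{a}=\mathbf{0}$, whence $\mathbf{a}=\mathbf{0}$ and then $\mathbf{b}=\mathbf{0}$. Hence $\tilde{\mathbf{H}}$ has trivial null space and full column rank $6$, which closes the chain of implications.

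The main obstacle is the explicit evaluation of $\tilde{\mathbf{H}}$: one must push the block form of ${ad}_{\hat{\mathbf{X}}_A}$ through the product $\mathbf{H}\,{ad}_{\hat{\mathbf{X}}_A}\mathbf{E}$ and confirm that the landmark rows retain only the cross-product (rotation) columns while the robot-position row keeps the full-rank $\hat{\mathbf{R}}$ block, since it is exactly this cancellation pattern that converts the algebraic rank condition into the clean geometric non-coplanarity statement. Once $\tilde{\mathbf{H}}^i$ is in hand, the null-space argument and the Loewner-monotonicity limit are routine; I would also note in passing that non-coplanarity is in fact slightly stronger than needed (non-collinear landmarks already force full rank), but it is the hypothesis the theorem invokes.
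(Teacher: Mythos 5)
Your proposal addresses only the final sentence of the theorem. The main content of Theorem 2 is the claim that after one motion step and $l$ observations the covariance actually takes the form $\mathbf{P}^B_l=\mathbf{P}^A_\infty+\bar{\mathbf{P}}^B_l$ with $\bar{\mathbf{P}}^B_l$ given by (\ref{eq::pbl2}); you declare at the outset that you will ``take the explicit expression for $\bar{\mathbf{P}}^B_l$ as given,'' which is to assume the bulk of what must be proved. The paper's proof spends most of its effort precisely there: (i) the propagation step, using $\mathbf{F}_n=\mathbf{I}$ and $\mathbf{G}_n={ad}_{\hat{\mathbf{X}}_A}\mathbf{B}_n$ from (\ref{eq::FNGNL}), yields $\mathbf{P}^0_B=\mathbf{P}^\infty_A+\Delta\mathbf{P}$ with $\Delta\mathbf{P}={ad}_{\hat{\mathbf{X}}_A}\mathbf{E}\tilde{\pmb{\Phi}}\mathbf{E}^\intercal{ad}_{\hat{\mathbf{X}}_A}^\intercal$; (ii) the $l$ updates are accumulated in information form, where the isotropy hypothesis $\pmb{\Psi}=\phi\mathbf{I}_3$ is used in an essential way to show that $\mathbf{H}_j^\intercal\bar{\pmb{\Psi}}^{-1}\mathbf{H}_j=\mathbf{H}^\intercal\bar{\pmb{\Psi}}^{-1}\mathbf{H}$ even though the Jacobians $\mathbf{H}_j$ are evaluated at the changing estimates $\hat{\mathbf{R}}_j$ (your proof never uses this hypothesis, which is a sign that something is missing); and (iii) the matrix inversion lemma together with the key identity $\mathbf{H}\mathbf{P}^\infty_A=\mathbf{0}$ collapses the update into the stated form, isolating the correction inside the $6\times 6$ odometry subspace. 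None of these steps appear in your argument, and step (iii) in particular is not a routine bookkeeping matter --- it is exactly why the new observations at B affect only the odometry-induced increment and not $\mathbf{P}^A_\infty$ itself.

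For the part you do treat, your argument is correct and in fact sharper than the paper's: the Loewner-monotonicity limit is sound, the block computation $\tilde{\mathbf{H}}^i=\bigl[\,S(\hat{\mathbf{p}}-\hat{\mathbf{f}}^{N+i})\hat{\mathbf{R}}\ \ \hat{\mathbf{R}}\,\bigr]$ agrees with the paper's $\mathbf{S}_1,\mathbf{S}_2$ blocks, and your null-space argument (differencing to get $(\hat{\mathbf{f}}^{N+i}-\hat{\mathbf{f}}^{N+j})\times(\hat{\mathbf{R}}\mathbf{a})=\mathbf{0}$) supplies a rank proof that the paper only asserts with ``generally speaking.'' Your observation that non-collinearity of the landmarks already suffices is also correct. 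But to constitute a proof of the theorem you must first derive (\ref{eq::pbl2}), not assume it.
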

\begin{proof}
	See Appendix \ref{app::theorem3}.
\end{proof}

{ {
From Theorem 2, one can see that the estimate of RI-EKF follows the expectation that  
``the only effect
of the observations made at point B is to reduce the additional
robot uncertainty generated from the odometry noise. The
observations made at point B cannot reduce the uncertainty of
the landmark further if the robot had already observed the landmark
many times at point A. \cite{HuangT-RO2007}"
}}

We illustrate the results of Theorem 2 using the following scenario. Initially the robot is stationary at point A and continually observes ten previously unseen landmarks. It moves one step to point B  after 200 such observations and then remains stationary for 200 more time steps while observing the same set of landmarks. The initial covariance matrix of robot pose is set as non-zero.  In Fig. \ref{fig::move}, we adopt $ \log (\det (\mathbf{P}_r))$ as the extent of the uncertainty w.r.t.  robot pose, where $\mathbf{P}_r \in \mathbb{R}^{6\times 6}$ is the covariance matrix of  the robot pose. In Fig.  \ref{fig::move},  the pose uncertainty from RI-EKF  remains unchanged  and increases at time 200 when robot moves one step due to odometry noise as expected.  Further landmark observations at point B while remaining stationary gradually reduce the pose uncertainty. In contrast, the pose uncertainty from $\mathbb{SO}(3)$-EKF falls below the initial value indicating incorrect injection of information, leading to an overconfident estimate of uncertainty.

\begin{figure}[htbp]
	\centering
	\includegraphics[width=2.4in]{./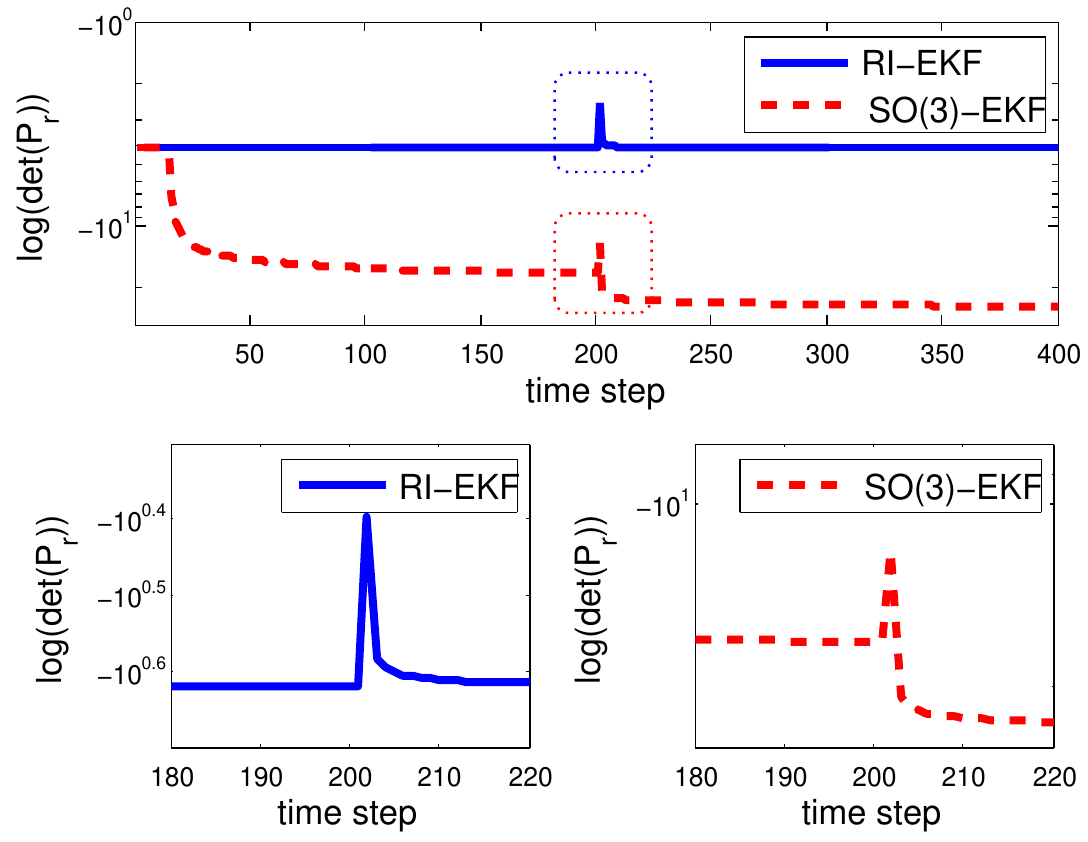}
	\caption{ \textbf{Illustration of Theorem 2}. The y-axis is $\log(\det(\mathbf{P}_r))$ that represents the pose uncertainty.  $\mathbf{P}_r$ is the covariance matrix of robot pose. Robot remains stationary from  time 1 to time 200,  moves one step at  time  200 and then remains stationary.}
	\label{fig::move}
	\vspace*{-4mm}
\end{figure}

{ {
\section{Consistency Analysis}
\label{Section::Consis}

As seen in the previous section, RI-EKF SLAM algorithm  meets the expectation that observing new landmarks does not help in reducing the robot pose uncertainty \cite{bailey2006consistency}\cite{GAS-SLAM-TRO2013}, while  $\mathbb{SO}(3)$-EKF contradicts this. 
This section further investigates the  reason for the phenomenon above. 

\subsection{Unobservability and invariance property}

This subsection first reviews the unobservability of SLAM formulation (\ref{eq::X})--(\ref{eq::obsmodel}), which is strongly related to the consistency issues of various SLAM estimation algorithms. The earliest concept of observability for nonlinear systems is proposed in \cite{hermann1977nonlinear}.  From the viewpoint of nonlinear systems, SLAM formulation (as a system for the actual state $\mathbf{X}$) is not  locally observable \cite{hermann1977nonlinear}, as understood in \cite{GPHuang-consistencyIJRR}\cite{hesch2013camera}. 
In the following, we will mathematically describe the unobservability of   SLAM formulation (\ref{eq::X})--(\ref{eq::obsmodel}) in terms of 
stochastic rigid body transformation. 
\begin{definition}
	\label{def:un}
	For SLAM problem formulation (\ref{eq::X})--(\ref{eq::obsmodel}), a stochastic rigid body transformation $\mathcal{T}_{\mathbf{g}}$  is 
     \begin{equation}
	\begin{aligned}
	\mathcal{T}_{\mathbf{g}} &(\mathbf{X})=( \exp(\Theta_1) \bar{\mathbf{R} }  \mathbf{R}, \exp(\Theta_1) \bar{\mathbf{R} }  \mathbf{p}+\bar{\mathbf{T}} +\Theta_2   ,\\
	&  \exp(\Theta_1) \bar{\mathbf{R} }  \mathbf{f}^1+\bar{\mathbf{T}} +\Theta_2 ,
	\cdots,  \exp(\Theta_1) \bar{\mathbf{R} }  \mathbf{f}^N+\bar{\mathbf{T}} +\Theta_2  ),
	\end{aligned}
	\label{eq::st}
	\end{equation}	
	 where $\mathbf{X}$ is given in (\ref{eq::X}),  $\mathbf{g}= (\bar{\mathbf{R}} , \bar{\mathbf{T}}, \Theta  )$,  $ \bar{\mathbf{R}} \in \mathbb{SO}(3) ,\bar{\mathbf{T}} \in \mathbb{R}^3$ and  $\Theta= \left[  \begin{array}{cc}
	 \Theta_1^\intercal & \Theta_2^\intercal
	 \end{array} \right]^\intercal \in \mathbb{R}^6 $ is white Gaussian noise with covariance $\bar{\pmb{\Sigma}}$. When the covariance $\bar{\pmb{\Sigma}}=\mathbf{0}_{6,6}$, this transformation degenerates into a \textbf{deterministic rigid body transformation}. When 
	 ${\mathbf{g}}=(\mathbf{I}_3,\mathbf{0}_{3,1}, \Theta )$, this transformation degenerates into a \textbf{stochastic identity transformation}.
\end{definition}

It can be easily verified  that \textbf{the output (observations)  of the system} (\ref{eq::X})--(\ref{eq::obsmodel}) is invariant to any stochastic rigid body transformation $\mathcal{T}_{\mathbf{g}}$: for any two initial  conditions,  $\mathbf{X}_0$ and $\mathbf{Y}_0:=\mathcal{T}_{\mathbf{g}}( \mathbf{X}_0)$,  we have $h_{n}(\mathbf{X}_{n}, \pmb{\xi}_{n})=h_{n}(\mathbf{Y}_{n}, \pmb{\xi}_{n}) $ for all $n\geq 0$, where $\mathbf{X}_{k} = f(\mathbf{X}_{k-1}, \mathbf{u}_{k-1}, \pmb{\epsilon}_{k-1}) $ and $\mathbf{Y}_{k} = f(\mathbf{Y}_{k-1}, \mathbf{u}_{k-1}, \pmb{\epsilon}_{k-1}) $ $(k=1,\cdots,n-1)$. Therefore, SLAM formulation (\ref{eq::X})--(\ref{eq::obsmodel}) is \textit{unobservable} in terms of stochastic rigid body transformation. 
In the following,  the invariance to stochastic rigid body transformation for the EKF framework based filter of SLAM formulation will be mathematically described.
\begin{definition}
The output (\textbf{estimated observations}) of a general EKF framework based filter is invariant under any stochastic rigid body transformation $ \mathcal{T}_{{\mathbf{g}}}$ if  for 
any  two initial estimates $(\hat{\mathbf{X}}_0, \mathbf{P}_0)$ and  $ ( \hat{\mathbf{Y}}_0, \mathbf{P}y_0 ) $, where  $ \hat{\mathbf{Y}}_0= \mathcal{T}_{\mathbf{g}} (  \hat{\mathbf{X}}_0 ) $ and $ \mathbf{P}y_0=\bar{\mathbf{Q}}_1\mathbf{P}_0\bar{\mathbf{Q}}^\intercal_1+ \bar{\mathbf{Q}}_2 \bar{\pmb{\Sigma}}\bar{\mathbf{Q}}^\intercal_2  $ in which  \begin{equation}
\begin{aligned}
\bar{\mathbf{Q}}_1&=\left.\frac{\partial  \mathcal{T}_{\hat{\mathbf{g}}} (\hat{\mathbf{X}}_0 \oplus \mathbf{e}  ) \ominus  \mathcal{T}_{\hat{\mathbf{g}}} (\hat{\mathbf{X}}_0 )  }{\partial \mathbf{e}}\right\rvert_{\mathbf{e}=\mathbf{0} },  \\
\bar{\mathbf{Q}}_2&=\left.\frac{\partial  \mathcal{T}_{\mathbf{g}} (\hat{\mathbf{X}}_0 ) \ominus  \mathcal{T}_{\hat{\mathbf{g}}} (\hat{\mathbf{X}}_0 )  }{\partial \Theta  }\right\rvert_{\Theta=\mathbf{0} }, \\
\end{aligned}
\end{equation}
and $\hat{\mathbf{g}}= (\bar{\mathbf{R}} , \bar{\mathbf{T}}, \mathbf{0} )$,  
	 we have $h_{n}(\hat{\mathbf{X}}_{n},\mathbf{0})=h_{n}(\hat{\mathbf{Y}}_{n},\mathbf{0})$ for all $n > 0$. The notations  
$\hat{\mathbf{X}}_{n}$ and $\hat{\mathbf{Y}}_{n}$ above represent the \textit{mean} estimate of this  filter at time $n$
by using the same input (odometry and observations) from time $0$ to $n$,  from  the initial conditions $(\hat{\mathbf{X}}_0, \mathbf{P}_0)$ and $(\hat{\mathbf{Y}}_0, \mathbf{P}y_0)$,  respectively. 
\end{definition}

As shown in Def. 1 and Def. 2,  the invariance to stochastic rigid body transformation can be divided into two properties: 1) \textbf{the invariance to deterministic rigid body transformation} and 2) \textbf{the invariance to stochastic identity transformation}.
The results about the invariance of some general EKF framework based filters are summarized in Theorem 3 and TABLE \ref{table::inv}.  
 \begin{theorem}
 	The output of RI-EKF is invariant under stochastic rigid body transformation. 
 	The output of $\mathbb{SO}(3)$-EKF is only invariant under deterministic rigid body transformation.
 	The output of Pseudo-RI-EKF is only invariant under stochastic identity transformation.
 	The output of $\mathbb{SE}(3)$-EKF is \textit{not} invariant under deterministic rigid body transformation or stochastic identity transformation.
\end{theorem}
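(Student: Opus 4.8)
The plan is to use the decomposition noted just before the theorem: invariance under a stochastic rigid body transformation is equivalent to the conjunction of (i) invariance under a deterministic rigid body transformation ($\bar{\pmb{\Sigma}}=\mathbf{0}_{6,6}$) and (ii) invariance under a stochastic identity transformation ($\mathbf{g}=(\mathbf{I}_3,\mathbf{0}_{3,1},\Theta)$). I would therefore establish or refute (i) and (ii) separately for each of the four filters. Every case rests on one induction on $n$: from the two initial conditions of Definition 2, I show that a single propagation step followed by a single update step reproduces both the relation between the two mean estimates and the relation between the two covariance matrices. Combined with the elementary observation-level identity $h_n(\mathcal{T}_{\mathbf{g}}(\hat{\mathbf{X}}_n),\mathbf{0})=h_n(\hat{\mathbf{X}}_n,\mathbf{0})$, which holds for \emph{every} retraction because each $h^i$ depends only on the relative vector $\mathbf{R}^\intercal(\mathbf{f}^i-\mathbf{p})$ and $\bar{\mathbf{R}}^\intercal\bar{\mathbf{R}}=\mathbf{I}_3$, preserving the mean relation yields equal estimated observations at all steps.

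For the deterministic part I would isolate two ingredients. The first is a pair of Jacobian-consistency relations, $\mathbf{H}^Y\bar{\mathbf{Q}}_1=\mathbf{H}$ and $\mathbf{F}^Y\bar{\mathbf{Q}}_1=\bar{\mathbf{Q}}_1\mathbf{F}$, which force $\mathbf{S}^Y=\mathbf{S}$, $\mathbf{K}^Y=\bar{\mathbf{Q}}_1\mathbf{K}$, and hence covariance propagation/update by the similarity $\bar{\mathbf{Q}}_1(\cdot)\bar{\mathbf{Q}}_1^\intercal$; these I verify by direct substitution using $S(\mathbf{A}\mathbf{v})=\mathbf{A}S(\mathbf{v})\mathbf{A}^\intercal$. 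The second, decisive, ingredient is an \emph{exact} (not merely first-order) equivariance of the retraction, $(\mathcal{T}_{\hat{\mathbf{g}}}(\hat{\mathbf{X}}))\oplus(\bar{\mathbf{Q}}_1\mathbf{v})=\mathcal{T}_{\hat{\mathbf{g}}}(\hat{\mathbf{X}}\oplus\mathbf{v})$ for finite $\mathbf{v}$, which transports the finite correction $\mathbf{K}\mathbf{y}$ through the transformation and so preserves the mean relation. For RI-EKF this is exact because $\mathcal{T}_{\hat{\mathbf{g}}}$ acts by left translation by a group element $\Gamma\in\mathcal{G}(N)$ and $\exp(ad_{\Gamma}\mathbf{v})\Gamma=\Gamma\exp(\mathbf{v})$; for $\mathbb{SO}(3)$-EKF it is exact because the world-frame additive position and landmark errors simply rotate by $\bar{\mathbf{R}}$. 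Both filters are therefore invariant under deterministic transformations. For Pseudo-RI-EKF the same substitution shows the identity holds only to first order, the residual $\exp(\bar{\mathbf{R}}\mathbf{v}_\theta)\bar{\mathbf{T}}-\bar{\mathbf{T}}+S(\bar{\mathbf{T}})\bar{\mathbf{R}}\mathbf{v}_\theta$ being a genuine second-order term, so a finite Kalman correction breaks the mean relation; I would turn this residual into an explicit numerical counterexample.

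For the stochastic identity part the deterministic Jacobian reduces to $\bar{\mathbf{Q}}_1=\mathbf{I}$, so the two filters share the \emph{same} initial mean and differ only through the extra covariance $\Delta_0=\bar{\mathbf{Q}}_2\bar{\pmb{\Sigma}}\bar{\mathbf{Q}}_2^\intercal$, where $\bar{\mathbf{Q}}_2$ is the generator of the transformation along $\Theta$. The crux is that this direction is \emph{unobservable}: the extra covariance stays invisible to the mean as long as, at every step, $\mathbf{H}_n\bar{\mathbf{Q}}_2=\mathbf{0}$ and $\mathbf{F}_n\,\mathrm{range}(\bar{\mathbf{Q}}_2^{(n)})\subseteq\mathrm{range}(\bar{\mathbf{Q}}_2^{(n+1)})$, so that $\Delta$ remains in $\ker\mathbf{H}$, the gain is unchanged, and the means stay equal. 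For RI-EKF and Pseudo-RI-EKF, $\mathbf{F}_n=\mathbf{I}$ and $\bar{\mathbf{Q}}_2$ is the constant matrix whose rotation block is $[\,\mathbf{I}_3\ \mathbf{0}_3\,]$ and whose every position and landmark block is $[\,\mathbf{0}_3\ \mathbf{I}_3\,]$, for which $\mathbf{H}_n\bar{\mathbf{Q}}_2=\mathbf{0}$ (each measurement block cancels via $\hat{\mathbf{R}}^\intercal-\hat{\mathbf{R}}^\intercal=\mathbf{0}$), so both conditions hold and stochastic-identity invariance follows. For $\mathbb{SO}(3)$-EKF one still has $\mathbf{H}_n\bar{\mathbf{Q}}_2=\mathbf{0}$, but now $\bar{\mathbf{Q}}_2$ depends on $\hat{\mathbf{p}},\hat{\mathbf{f}}^i$ and $\mathbf{F}_n\neq\mathbf{I}$, so $\mathbf{F}_0\bar{\mathbf{Q}}_2^{(0)}\neq\bar{\mathbf{Q}}_2^{(1)}$ and $\mathbf{H}_1\mathbf{F}_0\bar{\mathbf{Q}}_2^{(0)}\neq\mathbf{0}$; the extra covariance leaks into the innovation and perturbs the mean, which I certify with a minimal moving-robot example.

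The $\mathbb{SE}(3)$-EKF case, where I must show failure of \emph{both} properties, is the main obstacle, precisely because its $\mathbb{SE}(3)$ treatment of the pose superficially mimics RI-EKF and so passes several of the checks above; the break must be located carefully rather than read off. My plan is to compute $\bar{\mathbf{Q}}_1$ and $\bar{\mathbf{Q}}_2$ directly from its retraction $\ominus$, and then (a) for the deterministic property, test the finite-order equivariance and the consistency identities, isolating the term produced by the mismatch between the $\mathbb{SE}(3)$-coupled pose error and the decoupled additive landmark errors; and (b) for the stochastic-identity property, use $\mathbf{F}_n\neq\mathbf{I}$ together with the estimate-dependence of $\bar{\mathbf{Q}}_2$ to show $\mathbf{H}_{n}\mathbf{F}_{n-1}\bar{\mathbf{Q}}_2^{(n-1)}\neq\mathbf{0}$. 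In both sub-cases I would exhibit a small explicit configuration (one robot step with a few non-coplanar landmarks) that makes the offending quantity nonzero, thereby certifying non-invariance. The positive results are then clean corollaries of the two lemmas above, while every negative result reduces to producing one such certifying example.
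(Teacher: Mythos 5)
Your proposal follows essentially the same route as the paper's proof: the positive results rest on the exact retraction equivariance $\mathcal{T}(\mathbf{X}\oplus\mathbf{Q}_{\mathcal{T}}^{-1}\mathbf{e})=\mathcal{T}(\mathbf{X})\oplus\mathbf{e}$ (the paper's key identity), which yields $\mathbf{K}_y=\mathbf{Q}_{\mathcal{T}}\mathbf{K}$ and the similarity-transformed covariance by induction, and on the stochastic-identity side the same observation that the extra covariance $\bar{\mathbf{Q}}_2\bar{\pmb{\Sigma}}\bar{\mathbf{Q}}_2^\intercal$ stays in $\ker\mathbf{H}_n$ because $\mathbf{F}_n=\mathbf{I}$ for RI-EKF. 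The only difference is that you spell out the negative cases (Pseudo-RI-EKF, $\mathbb{SE}(3)$-EKF, and the SIT failure of $\mathbb{SO}(3)$-EKF) via explicit counterexample constructions, which the paper simply defers with "can be easily proven in a similar way or through a counter example."
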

\begin{proof}
	See Appendix \ref{app:theorem5}.
\end{proof}
\begin{remark}
	From the proof of Theorem 3, one can see that 
the uncertainty representation of RI-EKF can be linearly (seamlessly) transformed under a deterministic rigid body transformation, which makes  RI-EKF  invariant under deterministic rigid body transformation. In addition, we also show that the invariance property to stochastic identity transformation directly depends on the Jacobians $\mathbf{F}_n$ and  $\mathbf{H}_n$. 
\end{remark}

 	\begin{table}[htbp]
 		\centering
 		\caption{\textbf{The invariance property of the general EKF framework based filters.} DRBT/SRBT is short for ``deterministic/stochastic rigid body transformation" and SIT is short for ``stochastic identity transformation".}
 		\begin{tabular}{|l| r|r|r|}
 			\hline
 			Filters &DRBT&SIT &SRBT  \\
 			\hline
 			RI-EKF	&  Yes &Yes&Yes    \\ 	
 			Pseudo-RI-EKF	&  No&Yes&No   \\ 	
 			$\mathbb{SO}(3)$-EKF & Yes  & No& No  \\	
 		 			$\mathbb{SE}(3)$-EKF & No  & No & No  \\			
 			\hline
 		\end{tabular}
 		\label{table::inv}
 		\vspace*{-5mm}
 	\end{table}
 

\subsection{Consistency and invariance}
\label{Section::criterion}

The unobservability  in terms of stochastic rigid body transformation is a fundamental property of SLAM formulation. 
Therefore a consistent filter (as a system for the estimated state $\hat{\mathbf{X}}$) should maintain this unobservability, i.e.,  {\bf the (estimated) output of the estimator is invariant under any stochastic rigid body transformation.}  Essentially speaking, the invariance to deterministic rigid body transformation  can be interpreted that the estimate does \textit{not} depend on the selection of the global frame and the invariance to stochastic identity transformation can be understood that the uncertainty w.r.t the global frame does not affect the estimate.  
Hence, \textbf{consistency for the general EKF framework based filter is tightly coupled with the invariance to stochastic rigid body transformation. }
If a filter does not have this property, then unexpected information will be generated by the selection of the global frame or the uncertainty w.r.t. the global frame, which results in inconsistency (overconfidence).
One can see that $\mathbb{SO}(3)$-EKF, not invariant to stochastic identity transformation, produces clearly illogical estimate (the pose uncertainty is reduced by the new landmarks) in the two cases of Section \ref{Section::Convergence} while RI-EKF, 
invariant to stochastic rigid body transformation, produces the expected estimate.

\begin{remark}
	\label{remark::FEJ}
	In \cite{GPHuang-consistency08} \cite{GPHuang-consistencyIJRR}, a framework for designing an observability constrained filter is proposed. 
	 The keypoint of the observability constrained filter  is evaluating
 the Jacobians $\mathbf{F}_i$ and $\mathbf{H}_{i}$ ($i\geq 0$) at some selected points (instead of the latest estimate).
	In this way,  the output of the filter would be invariant under the stochastic identity transformation.
	On the other hand, this filter models the state space as $\mathbb{SO}(2) \times \mathbb{R}^{2+2N} $ and hence the output is  invariant under deterministic rigid body transformation (see the property of $\mathbb{SO}(3)$-EKF shown in Theorem 3). Finally, the resulting  filter indeed has the invariance property to stochastic rigid body transformation.
\end{remark}
\begin{remark}
	In  \cite{BarrauB15} the  observability analysis is  performed	on the linearized error-state model from the viewpoint of information matrix. 
	Our insight is in a different viewpoint that an estimator should mimic the unobservability (to stochastic rigid transformation) of the original system, which makes our analysis more intuitive and general. 
\end{remark}
}}

\section{Simulation Results}
\label{Section::Simulations}

In order to validate the theoretical results, we perform Monte Carlo simulations and compare RI-EKF to $\mathbb{SO}(3)$-EKF, Robotcentric-EKF, the First Estimates Jacobian EKF SLAM algorithm (FEJ-EKF), Pseudo-RI-EKF and 
$\mathbb{SE}(3)$-EKF   under conditions of different noise levels. The original Robocentric-EKF and FEJ-EKF are  proposed in 2D SLAM. For comparison, we extend these into 3D.

\subsection{Settings}
\label{Section::Performance Evaluation}

Consider that a robot moves in a  trajectory (contained in a $50m \times 40m \times 20m$ cubic)  
which allows sufficient 6-DOFs motion. In this environment, 300 landmarks are randomly generated around the specified robot trajectory.  The observations and odometry with noises are randomly generated by this specific trajectory and the simulated robot always observes the landmarks in the sensor range (less than 20m and $120^\circ$FoV). In every simulation, the number of steps is  500 (about 8 loops), the landmarks are incrementally added into the state vector and the initial covariance of robot is set as zero matrix.
For each condition (different noise level), 100 Monte Carlo simulations are performed. 
 The simulation results are summarized in Fig. \ref{fig::nees} and Table \ref{table::3D0}, where  $\sigma_{od}$ is the odometry noise level and $\sigma_{ob}$ is the observation noise level such that the covariance matrices of odometry and observation and  is $\pmb{\Phi}_n=\sigma_{od}^2\text{diag}( | \mathbf{u}_{n,1}|^2,\cdots, |\mathbf{u}_{n,6}|^2)  $ and $\pmb{\Psi}^i_{n}= \sigma_{ob}^2 \text{diag}( | \mathbf{Z}^i_{n,1}|^2,\cdots, |\mathbf{Z}^i_{n,3}|^2) $, where $ \mathbf{Z}^i_n= [ \mathbf{Z}^i_{n,1},\mathbf{Z}^i_{n,2} , \mathbf{Z}^i_{n,3} ]^\intercal =\mathbf{R}^\intercal_n ( \mathbf{f}^i-\mathbf{p}_n) $ is the ground truth of the coordinates of landmark $i$ relative to the robot pose $n$. The root mean square (RMS) error and the average normalized estimation error squared (NEES) are used to evaluate  accuracy and consistency, respectively. 

{ {
\subsection{Results and analysis}

As shown in Table \ref{table::3D0}, the estimate of  $\mathbb{SE}(3)$-EKF diverges even under the condition of low noise ($\sigma_{od}=1\%$, $\sigma_{ob}=1\%$) and  Pseudo-RI-EKF is also poor performing.
 These results can be understood because $\mathbb{SE}(3)$-EKF has no invariance property to 
deterministic rigid body transformation or stochastic identity transformation
and Pseudo-RI-EKF is not invariant under deterministic rigid body transformation, which are proven in Theorem 3. 
$\mathbb{SO}(3)$-EKF, not invariant to stochastic identity transformation,
 is also not good performing in terms of consistency. 

An interesting point in  Table \ref{table::3D0} is the performance of Robocentric-EKF.  The uncertainty representation w.r.t landmarks in Robocentric-EKF refers to the uncertainty 
of the coordinates of landmarks relative to robot frame. In this way, Robocentric-EKF has the invariance property to stochastic rigid body transformation. However,  Robocentric-EKF does not perform well under the condition of high noise ($\sigma_{od}=5\%$, $\sigma_{ob}=5\%$) because it incurs greater linearization errors in the propagation step due to the coordinate transformation applied to the landmarks, as compared to $\mathbb{SO}(3)$-EKF, FEJ-EKF and RI-EKF. 

RI-EKF, invariant to stochastic rigid body transformation, is  the best performing filter  as shown in Table \ref{table::3D0} and it is also consistent  in terms of the $95\%$ confidence bound as shown in Fig. \ref{fig::nees}. 
 Similar results for 2D cases have been reported in \cite{BarrauB15}.
 On the other hand, it is discussed in  \textit{Remark} \ref{remark::FEJ} of Section V that FEJ-EKF  also has the invariance property to stochastic
rigid body transformation but it performs less well than RI-EKF. It  can be explained that
FEJ-EKF  uses a less accurate
estimate as linearization point for evaluating the Jacobians while RI-EKF can always safely
employ the latest estimate in Jacobians.

}}

\begin{table*}[htbp]
	\centering
	\caption{\textbf{Performance Evaluation}}
	\begin{tabular}{|ccccccc|}
			\hline
			$\sigma_{od}=1\%$, $\sigma_{ob}=1\%$	&   RI-EKF &  FEJ-EKF & $\mathbb{SO}(3)$-EKF & Robocentric-EKF & Pseudo-RI-EKF & $\mathbb{SE}(3)$-EKF   \\ \hline		
			RMS of position(m) & \textbf{0.25} & 0.29 & 0.32  & 0.31 & 0.65 & Diverge      \\
			RMS of orientation(rad) & \textbf{0.0058} & 0.0071 &   0.0065 & 0.0060  & 0.0081 & Diverge   \\
			NEES of orientation & \textbf{1.02}     & 1.12     &  1.34  &  1.04 & 2.91 & Diverge   \\
			NEES of pose  & \textbf{1.01}  &   1.14      &    1.35  & 1.15 & 10 & Diverge   \\
					\hline
	$\sigma_{od}=5\%$, $\sigma_{ob}=5\%$	&   RI-EKF &  FEJ-EKF & $\mathbb{SO}(3)$-EKF & Robocentric-EKF & Pseudo-RI-EKF & $\mathbb{SE}(3)$-EKF   \\ \hline		
	RMS of position(m) & \textbf{1.16} & 1.24 & 2.0  & 2.4 &  3.90 & Diverge  \\
	RMS of orientation(rad) & \textbf{0.027} & 0.029  &   0.043 & 0.041 &  0.041 & Diverge \\
	NEES of orientation & \textbf{1.0}     & 1.05     &  3.7 & 3.0 &  1.77 & Diverge   \\
	NEES of pose  & \textbf{1.01}  &  1.13      &   3.1  & 7.5 &  92 & Diverge  \\
	\hline		
	\end{tabular}
	\vspace*{-5mm}
	\label{table::3D0}
\end{table*}

\begin{figure}[h]
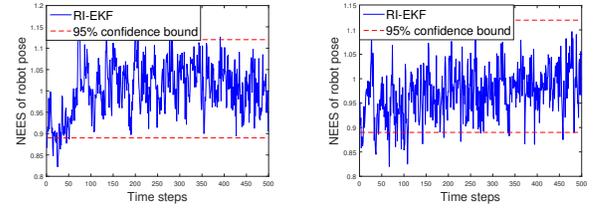

	\begin{center}$
		\begin{array}{cc}
		\includegraphics[width=1.5in]{./figures/R-EKFposeNees0101}&
		\includegraphics[width=1.5in]{./figures/R-EKFposeNees0505}
		\end{array}$
	\end{center}
	\caption{\textbf{Average NEES of robot pose by RI-EKF from 100 Monte Carlo results. } The $95\%$ confidence bound is $[0.89, 1.12 ]$.
		 Left: $\sigma_{od}=1\%$, $\sigma_{ob}=1\%$.  Right: $\sigma_{od}=5\%$, $\sigma_{ob}=5\%$.}
	\label{fig::nees}
		\vspace*{-6mm}
\end{figure}

\section{Conclusion}
\label{Section::Conclusion}
In this work,  the convergence properties and consistency of a Lie group based invariant-EKF SLAM algorithm (RI-EKF) are analyzed.  For convergence,  several theorems with proofs are provided for two fundamental cases.
 For consistency, we propose that consistency of the general EKF framework based filter is tightly coupled with the invariance property. We also proves that the output of RI-EKF is invariant under stochastic rigid body transformation while the output of $\mathbb{SO}(3)$-EKF is only invariant under deterministic rigid body transformation. Monte Carlo simulation results demonstrates that
 the invariance property has an important impact on the consistency and accuracy of the estimator  and
  RI-EKF outperforms $\mathbb{SO}(3)$-EKF, $\mathbb{SE}(3)$-EKF, Robocentric-EKF and FEJ-EKF for 3D SLAM. 
  Future work includes extensively comparing the performance of RI-EKF SLAM algorithm with the optimization based SLAM algorithms to identify situations under which RI-EKF is sufficient, as well as extending RI-EKF to the case of  visual-inertial fusion.  
 

\appendix

\subsection{Lie Group $\mathcal{G}(N)$}\label{app::liegroup}
The notation $\mathcal{G}(N)$ is a Lie group, defined as
\begin{equation}
\mathbf{G}=\{ \left( \mathbf{R}, \mathbf{p}, \mathbf{f}^1,\cdots, \mathbf{f}^N\right) | \mathbf{R} \in \mathbb{SO}(3) , \mathbf{p} \text{\ and\ }\mathbf{f}^i\in\mathbb{R}^3    \}.
\end{equation} 
The associated group operation of $\mathcal{G}(N)$ is 
\begin{equation}
\begin{aligned}
&\mathbf{X}_1 \mathbf{X}_2= \left( \mathbf{R}_1 \mathbf{R}_2, \mathbf{R}_1\mathbf{p}_2+\mathbf{p}_1,  \cdots, \mathbf{R}_1\mathbf{f}^N_2+\mathbf{f}^N_1 \right),
\end{aligned}
\end{equation} 
where $ \mathbf{X}_{i}=\left (\mathbf{R}_{i}, \mathbf{p}_{i},\mathbf{f}^1_{i},\cdots, \mathbf{f}^N_{i} \right ) \in \mathcal{G}(N)  $ for $i = 1,2$. The associated Lie algebra of $\mathcal{G}(N)$ is homomorphic to $\mathbb{R}^{3N+6}$. The exponential mapping $\exp$ is represented as
\begin{equation}
\begin{aligned}
&\exp(\mathbf{\mathbf{e}})\in \mathcal{G}(N) \\
=&\left( \exp(\mathbf{e}_\theta), {J}_r(-\mathbf{e}_\theta)\mathbf{e}_p, {J}_r(-\mathbf{e}_\theta)\mathbf{e}^{1} , \cdots,    {J}_r(-\mathbf{e}_\theta)\mathbf{e}^N \right)   \\
\end{aligned}
\label{eq::exp}
\end{equation}
for $\mathbf{e} = \left[ \begin{array}{ccccc}
\mathbf{e}^\intercal_\theta &  \mathbf{e}_p^\intercal  & (\mathbf{e}^1)^\intercal & \cdots & (\mathbf{e}^N)^\intercal
\end{array}\right]^\intercal  \in \mathbb{R}^{3N+6}$, where $\mathbf{e}_\theta$, $\mathbf{e}_p$ and $\mathbf{e}^i \in \mathbb{R}^3$ ($i=1,\cdots,N$), the notation $\exp$ in the right side of (\ref{eq::exp}) and the mapping $\text{J}_r$ are given:  \begin{equation}
\begin{aligned}
&\exp(\mathbf{y})  =\mathbf{I}_3+\frac{\sin (\| \mathbf{y} \| )}{\| \mathbf{y} \|} {S}(\mathbf{y})+ \frac{1-\cos(\| \mathbf{y} \|) }{\| \mathbf{y} \|^2}{S}^2(\mathbf{y})   \\
\end{aligned}
\label{eq::expS}
\end{equation}
\begin{equation}
	{J}_r(\mathbf{y}) =\mathbf{I}_3 -\frac{1-\cos (\| \mathbf{y} \| )}{\| \mathbf{y} \|^2} {S}(\mathbf{y})+ \frac{\| \mathbf{y} \|-\sin(\| \mathbf{y} \|) }{\| \mathbf{y} \|^3}{S}^2(\mathbf{y})
\end{equation}
for $\mathbf{y}\in\mathbb{R}^3$. 
The adjoint  $\text{ad}_\mathbf{X}$ is computed as 
\begin{equation}
\text{ad}_\mathbf{X} = \left[
\begin{array}{ccccc}
\mathbf{R} &  \mathbf{0}_{3,3} & \cdots & \cdots & \mathbf{0}_{3,3} \\
{S}(\mathbf{p}) \mathbf{R}    & \mathbf{R}   & \ddots  &  & \vdots \\
{S}(\mathbf{f}^1) \mathbf{R}   & \mathbf{0}_{3,3} & \mathbf{R} & \ddots & \vdots    \\      
\vdots & \vdots  & \ddots  & \ddots  &   \mathbf{0}_{3,3} \\
{S}(\mathbf{f}^N)\mathbf{R} & \mathbf{0}_{3,3}  & \cdots & \mathbf{0}_{3,3}   & \mathbf{R}
\end{array}
\right].
\end{equation}

\subsection{Proof of Theorem 1}\label{app::theorem2}
In the following, we  use  mathematical induction to prove this theorem. Note that
At the beginning, the estimate is $(\hat{\mathbf{X}},\mathbf{P})$ where
$\hat{\mathbf{X}}=
\left(               
\hat{\mathbf{R}} , \hat{\mathbf{p}}, \hat{\mathbf{f}}^1,\cdots,\hat{\mathbf{f}}^N   \\
\right )               
$.
After the first observation, the mean estimate of state and covariance matrix are augmented as below via the method shown in  Alg. \ref{Alg::Aug}:  $\hat{\mathbf{X}}_1 =
\left(               
\hat{\mathbf{R}} , \hat{\mathbf{p}} , \hat{\mathbf{f}}^1,\cdots,\hat{\mathbf{f}}^N ,  \hat{\mathbf{f}}^{N+1}   \\  
\right )$ and 
$ \mathbf{P}_{1} = \left[\begin{array}{cc}
\mathbf{P} &   \mathbf{L}    \\
\mathbf{L}^\intercal & \hat{\mathbf{R}} \pmb{\Psi} \hat{\mathbf{R}}^\intercal + \mathbf{W}
\end{array}
\right]  $.
Obviously, after one observation, the mean estimate of robot pose and the previous ``landmarks" does not change and the covariance matrix follows the proposed form. We now assume that after $k$ times observations, the  estimate becomes  
$\hat{\mathbf{X}}_k=
\left(              
\hat{\mathbf{R}} , \hat{\mathbf{p}} ,\hat{\mathbf{f}}^1,\cdots,\hat{\mathbf{f}}^N,  \hat{\mathbf{f}}^{N+1}_k   \\  
\right)               
$ and 
$\mathbf{P}_{k} =  \left[\begin{array}{cc}
\mathbf{P} &   \mathbf{L}    \\
\mathbf{L}^\intercal &  \frac{\hat{\mathbf{R}} \pmb{\Psi} \hat{\mathbf{R}}^\intercal}{k} + \mathbf{W}
\end{array}
\right]$.
Now we discuss the case after $k$ times observations of next propagation and update. Because the robot is always perfectly stationary, after propagation at time $k$, the mean estimate is $ \hat{\mathbf{X}}_{k+1|k}= \hat{\mathbf{X}}_{k}  $ and covariance matrix becomes 
 $ \mathbf{P}_{k+1|k} =  \mathbf{P}_{k} $.
According to Alg. \ref{Alg::G-EKF},  we have  $ \mathbf{S}=  \mathbf{H} \mathbf{P}_{k+1|k}  \mathbf{H}^\intercal+ \pmb{\Psi} = \frac{k+1}{k} \pmb{\Psi}$ and
$
	\mathbf{K}=  \mathbf{P}_{k+1|k}\mathbf{H}^\intercal\mathbf{S}^{-1} = \left[\begin{array}{cc}
	\mathbf{0}_{3,(3N+6)}  &
	-\frac{ 1   }{k+1} \hat{\mathbf{R}}^\intercal
	\end{array} \right]^\intercal,  
$
where $\mathbf{H}=
\left[               
\begin{array}{cccccc}   
\mathbf{0}_{3, 3} & \hat{\mathbf{R}}^\intercal & \mathbf{0}_{3, 3N}&  -\hat{\mathbf{R}}^\intercal    \\  
\end{array}
\right ]               
$. Then it is easy to see that all elements from the vector $\mathbf{Ky}$ are zero except the last 3 elements, and hence the estimate of robot pose and the old landmarks after $k+1$ times observations are \textbf{the same} as that  in the time step $k$. The covariance matrix at time $k+1$ is
$
\mathbf{P}_{k+1} =  (\mathbf{I}-\mathbf{KH})\mathbf{P}_{k+1|k}  = \left[\begin{array}{cc}
\mathbf{P} &   \mathbf{L}    \\
\mathbf{L}^\intercal  &  \frac{\hat{\mathbf{R}} \pmb{\Psi} \hat{\mathbf{R}}^\intercal}{k+1} + \mathbf{W}
\end{array}
\right]
$.
When $k$ converges to infinity, we have (\ref{eq::PAend}).

\subsection{Proof of Theorem 2}\label{app::theorem3}
By using result in Theorem 1 and the Jacobian matrices in (\ref{eq::FNGNL}), we have 
\begin{equation}
\mathbf{P}^0_B=\mathbf{P}^{\infty}_A + \Delta \mathbf{P},
\label{eq::A2}
\end{equation}
where 
$\mathbf{P}^{\infty}_A$ (given in (\ref{eq::MultipleAdd})) is the covariance matrix  before moving to the point $B$,
$
\Delta \mathbf{P}= {ad}_{\hat{\mathbf{X}}_A} \mathbf{E} \tilde{\pmb{\Phi}} \mathbf{E}^\intercal {ad}_{\hat{\mathbf{X}}_A}^\intercal 
$ can be regarded as the incremental uncertainty  caused by the odometry noise,
and $\tilde{\pmb{\Phi}}=   \mathbf{B} \pmb{\Phi} \mathbf{B}^\intercal  $ is a positive definite matrix.

After $l$ observations at point B, the information matrix $\pmb{\Omega}^{B}_l$ (the inverse of $\mathbf{P}^B_l$)    becomes
$
\pmb{\Omega}^{B}_l=\pmb{\Omega}^{B}_0+\sum_{j=1}^{l}\mathbf{H}_j^\intercal \bar{\pmb{\Psi}}^{-1} \mathbf{H}_j
$,
where $ \mathbf{H}_j $ is obtained by stacking all matrices 
$
\mathbf{H}^i_j=\hat{\mathbf{R}}_j^\intercal  \mathbf{H}^i
$ ($i=1,\cdots,m$),	
and $\hat{\mathbf{R}}_j$ is the estimated orientation after $j$ times observations at point B.
Note that $\bar{\pmb{\Psi}}$ is  isotropic, we have $\mathbf{H}_j^\intercal \bar{\pmb{\Psi}}^{-1} \mathbf{H}_j=\mathbf{H}^\intercal \bar{\pmb{\Psi}}^{-1} \mathbf{H}$ ($j=1,\cdots,l$). Therefore, the information matrix is $ \pmb{\Omega}^{B}_l=\pmb{\Omega}^{B}_0+ l  \mathbf{H}^\intercal \bar{\pmb{\Psi}}^{-1} \mathbf{H} $.
Via the matrix inversion lemma in \cite{HuangT-RO2007}, the covariance matrix after $l$ observations at point B is 
\begin{equation}
\begin{aligned}
\mathbf{P}^l_B & =(\pmb{\Omega}^{B}_l)^{-1} = \mathbf{P}^0_B -\mathbf{P}^0_B \mathbf{H}^\intercal  ( \frac{\bar{\pmb{\Psi}}}{l}+ \mathbf{H} \mathbf{P}^0_B  \mathbf{H}^\intercal   )^{-1}      \mathbf{H}\mathbf{P}^0_B .\\
\end{aligned}
\label{eq::A1}
\end{equation}
Note that $\mathbf{H}\mathbf{P}^{\infty}_A= \mathbf{0}$, we substitute 
(\ref{eq::A2}) into (\ref{eq::A1}):
\begin{equation}
\begin{aligned}
\mathbf{P}^l_B & =\mathbf{P}^{\infty}_A +\Delta \mathbf{P}  -\Delta \mathbf{P} \mathbf{H}^\intercal  ( \frac{\bar{\pmb{\Psi}}}{l}+ \mathbf{H} \Delta \mathbf{P}  \mathbf{H}^\intercal   )^{-1}      \mathbf{H}\Delta \mathbf{P} \\
&= \mathbf{P}^{\infty}_A + {ad}_{\hat{\mathbf{X}}_A} \mathbf{E} ( \tilde{\pmb{\Phi}}^{-1}+l\tilde{\mathbf{H}}^\intercal \bar{\pmb{\Psi}}^{-1}    \tilde{\mathbf{H}}   )^{-1}  \mathbf{E}^\intercal {ad}_{\hat{\mathbf{X}}_A}^\intercal\\
&=\mathbf{P}^{\infty}_A + \bar{\mathbf{P}}^{l}_B.	
\end{aligned}
\end{equation}
Furthermore, 
$
\begin{aligned}
\tilde{\mathbf{H}}^\intercal \bar{\pmb{\Psi}}^{-1}    \tilde{\mathbf{H}} 
=  \left[
\begin{array}{cc}
\mathbf{S}_1  & \mathbf{S}_2 \\
\mathbf{S}_2^\intercal  & m \pmb{\Psi}^{-1}
\end{array}
\right]
\end{aligned}
$
where  $\mathbf{S}_1= \sum_{i=1}^{m}{S}^\intercal (\tilde{\mathbf{f}}_i) \pmb{\Psi}^{-1}{S}(\tilde{\mathbf{f}}_i)  $,
$ \mathbf{S}_2=   (\sum_{i=1}^{m} {S}(\tilde{\mathbf{f}}_i))^\intercal  \pmb{\Psi}^{-1}   $ and
$\tilde{\mathbf{f}}_i = \hat{\mathbf{R}}^\intercal ( \hat{\mathbf{p}} - \hat{\mathbf{f}}_i  )  $ ($i=1,\cdots,m$). Generally speaking, $ \tilde{\mathbf{H}}^\intercal \pmb{\Psi}^{-1}    \tilde{\mathbf{H}} $ is full rank when $m>3$ and there are three landmarks that are non-coplanar with the robot position. Under this condition, it is easy to see that 
$ \mathbf{P}^l_B \rightarrow \mathbf{P}^{\infty}_A  $ when $ l \rightarrow \infty$.

\subsection{Proof of Theorem 3}\label{app:theorem5}
Here, we only prove that the invariance property of RI-EKF and $\mathbb{SO}(3)$-EKF.  The invariance properties of the other algorithms can be easily proven in a similar way or through a counter example.  

First, we prove that the outputs of $\mathbb{SO}(3)$-EKF and RI-EKF is invariant to deterministic rigid body transformation.   
Assume the estimate at time $0$ is 
$(\hat{\mathbf{X}}_0,\mathbf{P}_0)$ in terms of the general EKF framework. After one step propagation via the odometry $\mathbf{u}_0$, the estimate becomes $(\hat{\mathbf{X}}_{1|0},\mathbf{P}_{1|0})$. Then after obtaining observations $\mathbf{z}_{1}$, the estimate becomes $(\mathbf{X}_{1}, \mathbf{P}_{1})$. 
On the other hand, in $\mathbb{SO}(3)$-EKF and RI-EKF, there exists a matrix $\mathbf{Q}_{\mathcal{T}}$ for any rigid body transformation $\mathcal{T}$ such that
\begin{equation}
\mathcal{T} (\mathbf{X} \oplus \mathbf{Q}_{\mathcal{T}}^{-1} \mathbf{e}  ) = \mathcal{T} (\mathbf{X}  ) \oplus  \mathbf{e}\ \  \ \forall\ \mathbf{X}.
\label{eq::add+}
\end{equation}   
Therefore, if 
 a deterministic rigid body transformation $\mathcal{T}$ is applied at time $0$, the estimate   becomes $  (\hat{\mathbf{Y}}_0,\mathbf{P}y_0) $, where $\hat{\mathbf{Y}}_0= \mathcal{T}(\hat{\mathbf{X}}_0)   $ and $\mathbf{P}y_0= 
  \mathbf{Q}_{ \mathcal{T} } \mathbf{P}_0 \mathbf{Q}_{ \mathcal{T} }^\intercal $. 
Now we calculate the new Jacobians $\mathbf{F}y_0$ and $\mathbf{G}y_0$   in propagation
 \begin{equation}
	\begin{aligned}
\mathbf{F}y_0=& \left.\frac{\partial  f ( \hat{\mathbf{Y}}_0 \oplus \mathbf{e}, \mathbf{u}_0, \mathbf{0}  ) \ominus  f ( \hat{\mathbf{Y}}_0, \mathbf{u}_0, \mathbf{0}   ) }{\partial \mathbf{e}}\right\rvert_{\mathbf{0} } \\
=&  \left. \frac{\partial  f ( \mathcal{T}( \hat{\mathbf{X}}_0) \oplus \mathbf{e}, \mathbf{u}_0, \mathbf{0}  ) \ominus  f ( \mathcal{T}( \hat{\mathbf{X}}_0), \mathbf{u}_0, \mathbf{0}   ) }{\partial \mathbf{e}}\right\rvert_{\mathbf{0} } \\
\overset{(\ref{eq::add+})}{=}& \left. \frac{\partial  f ( \mathcal{T}( \hat{\mathbf{X}}_0 \oplus \mathbf{Q}^{-1}_{\mathcal{T}} \mathbf{e}), \mathbf{u}_0, \mathbf{0}  ) \ominus  f ( \mathcal{T}( \hat{\mathbf{X}}_0), \mathbf{u}_0, \mathbf{0}   ) }{\partial \mathbf{e}}\right\rvert_{\mathbf{0} } \\
=& \left. \frac{\partial  \mathcal{T} (  f( \hat{\mathbf{X}}_0 \oplus \mathbf{Q}_{\mathcal{T}}^{-1} \mathbf{e}, \mathbf{u}_0, \mathbf{0}  )) \ominus  \mathcal{T} ( f( \hat{\mathbf{X}}_0, \mathbf{u}_0, \mathbf{0}  ) ) }{\partial \mathbf{e}}\right\rvert_{\mathbf{0} }     \\
=& \left. \frac{\partial  \mathcal{T} (  f( \hat{\mathbf{X}}_0, \mathbf{u}_0, \mathbf{0}) \oplus \mathbf{F}_{0}  \mathbf{Q}_{\mathcal{T}}^{-1} \mathbf{e})   \ominus  \mathcal{T} ( f( \hat{\mathbf{X}}_0, \mathbf{u}_0, \mathbf{0}  ) ) }{\partial \mathbf{e}}\right\rvert_{\mathbf{0} }     \\
\overset{(\ref{eq::add+})}{=}&  \mathbf{Q}_{\mathcal{T}}\mathbf{F}_{0} \mathbf{Q}_{\mathcal{T}}^{-1}.
	\end{aligned}
\end{equation}
Similarly, we have $ \mathbf{G}y_0= \mathbf{Q}_{\mathcal{T}}  \mathbf{G}_0 $. Hence,   after one step propagation the estimate becomes $(\hat{\mathbf{Y}}_{1|0},\mathbf{P}y_{1|0})$, where $\hat{\mathbf{Y}}_{1|0}=f(\hat{\mathbf{Y}}_{0},\mathbf{u}_0,\mathbf{0} ) = \mathcal{T}(\hat{\mathbf{X}}_{1|0})  $
and  $\mathbf{P}y_{1|0}= \mathbf{F}y_0 \mathbf{P}y_{0}\mathbf{F}y^\intercal_0+  \mathbf{G}y_0 \pmb{\Phi}_0 \mathbf{G}y^\intercal_{0}   = \mathbf{Q}_{\mathcal{T}} \mathbf{P}_{1|0} \mathbf{Q}^\intercal_{\mathcal{T}}    $. The new Jacobians  in update becomes $\mathbf{H}y_{1}= \mathbf{H}_{1} \mathbf{Q}^{-1}_{\mathcal{T}}  $. Then it is easy to obtain $\mathbf{K}_y=\mathbf{Q}_{\mathcal{T}}   \mathbf{K}  $, resulting in $\hat{\mathbf{Y}}_{1}= \hat{\mathbf{Y}}_{1|0}\oplus \mathbf{K}_y \mathbf{y} =\mathcal{T}( \hat{\mathbf{X}}_{1|0}  )  \oplus  \mathbf{Q}_{\mathcal{T}} \mathbf{K} \mathbf{y} = \mathcal{T} ( \hat{\mathbf{X}}_{1|0} \oplus  \mathbf{K} \mathbf{y}     )  =  \mathcal{T}( \hat{\mathbf{X}}_{1}  )   $. The covariance matrix after update  becomes 
$ \mathbf{P}y_{1}= (\mathbf{I}- \mathbf{K}_y\mathbf{H}y_{1} )\mathbf{P}y_{1|0}=    \mathbf{Q}_{\mathcal{T}} \mathbf{P}_{1}  \mathbf{Q}^\intercal_{\mathcal{T}}$.
In all, $\hat{\mathbf{Y}}_{1}= \mathcal{T}( \hat{\mathbf{X}}_{1}  )  $  and  $ \mathbf{P}y_{1}= \mathbf{Q}_{\mathcal{T}} \mathbf{P}_{1}  \mathbf{Q}^\intercal_{\mathcal{T}}   $. By mathematical induction, we can see 
 the outputs of $\mathbb{SO}(3)$-EKF (and RI-EKF) are invariant under deterministic rigid body transformation.

Secondly, we prove the invariance property of RI-EKF  under  stochastic identity body transformation $ \mathcal{T}_{\mathbf{g}}$ ($ \mathbf{g}=(\mathbf{I}_3,\mathbf{0}, \Theta   ) $) for all $\bar{\pmb{\Sigma}}$ where $\bar{\pmb{\Sigma}}$ is the covariance matrix of noise $\Theta$. Consider  the estimate at time $0$ is 
$(\hat{\mathbf{X}}_0,\mathbf{P}_0)$ in RI-EKF. 
If the stochastic rigid body transformation $ \mathcal{T}_{\mathbf{g}}$ is applied, the estimate 
becomes $  (\hat{\mathbf{X}}_0,\mathbf{P}_0+\Delta \mathbf{P} )  $ where  $\Delta \mathbf{P}= \mathbf{C} \bar{\Sigma}  \mathbf{C}^\intercal$ and
\begin{equation}
\mathbf{C}=  \frac{ \partial   \mathcal{T}_{\mathbf{g}} ( \hat{\mathbf{X}}_{0} ) \ominus      \hat{\mathbf{X}}_{0}       }{\partial \Theta}|_{\mathbf{0}}= \left[ \begin{array}{cc}
\mathbf{I}_3 & \mathbf{0}_{3,3} \\
\mathbf{0}_{3,3}  &  \mathbf{I}_3\\
\vdots  &  \vdots \\
\mathbf{0}_{3,3}  &  \mathbf{I}_3
\end{array}   \right]. 
\end{equation}
After propagation, the estimate becomes $  (\hat{\mathbf{X}}_{1|0},\mathbf{P}_{1|0}+\Delta \mathbf{P} )  $ due to $\mathbf{F}_n=\mathbf{I}$ given in (\ref{eq::FNGNL}). Note that $\mathbf{H}_{1} \Delta \mathbf{P}=\mathbf{0} $, it is easy to get the posterior estimate $ ( \hat{\mathbf{X}}_{1} , \mathbf{P}_{1}+\Delta \mathbf{P}     )   $. By mathematical induction, we can conclude that the output of RI-EKF is invariant under stochastic identity transformation.



\bibliographystyle{IEEEtran}
\bibliography{ref.bib}

\begin{thebibliography}{10}
\providecommand{\url}[1]{#1}
\csname url@samestyle\endcsname
\providecommand{\newblock}{\relax}
\providecommand{\bibinfo}[2]{#2}
\providecommand{\BIBentrySTDinterwordspacing}{\spaceskip=0pt\relax}
\providecommand{\BIBentryALTinterwordstretchfactor}{4}
\providecommand{\BIBentryALTinterwordspacing}{\spaceskip=\fontdimen2\font plus
\BIBentryALTinterwordstretchfactor\fontdimen3\font minus
  \fontdimen4\font\relax}
\providecommand{\BIBforeignlanguage}[2]{{%
\expandafter\ifx\csname l@#1\endcsname\relax
\typeout{** WARNING: IEEEtran.bst: No hyphenation pattern has been}%
\typeout{** loaded for the language `#1'. Using the pattern for}%
\typeout{** the default language instead.}%
\else
\language=\csname l@#1\endcsname
\fi
#2}}
\providecommand{\BIBdecl}{\relax}
\BIBdecl

\bibitem{Julier-2001}
S.~J. Julier and J.~K. Uhlmann, ``A counter example to the theory of
  simultaneous localization and map building,'' in \emph{Robotics and
  Automation, 2001. Proceedings 2001 ICRA. IEEE International Conference on},
  vol.~4, 2001, pp. 4238--4243 vol.4.

\bibitem{HuangT-RO2007}
S.~Huang and G.~Dissanayake, ``Convergence and consistency analysis for
  extended kalman filter based slam,'' \emph{IEEE Transactions on Robotics},
  vol.~23, no.~5, pp. 1036--1049, Oct 2007.

\bibitem{bailey2006consistency}
T.~Bailey, J.~Nieto, J.~Guivant, M.~Stevens, and E.~Nebot, ``Consistency of the
  ekf-slam algorithm,'' in \emph{2006 IEEE/RSJ International Conference on
  Intelligent Robots and Systems}, Oct 2006, pp. 3562--3568.

\bibitem{Castellanos04}
J.~A. Castellanos, J.~Neira, and J.~D. Tard{\'o}s, ``Limits to the consistency
  of ekf-based slam,'' in \emph{5th IFAC Symp, Intell. Autonom. Veh. IAV'04},
  2004.

\bibitem{GAS-SLAM-TRO2013}
P.~Louren{\c{c}}o, B.~J. Guerreiro, P.~Batista, P.~Oliveira, and C.~Silvestre,
  ``Simultaneous localization and mapping for aerial vehicles: a 3-d
  sensor-based gas filter,'' \emph{Autonomous Robots}, vol.~40, no.~5, pp.
  881--902, 2016.

\bibitem{GPHuang-consistency08}
G.~P. Huang, A.~I. Mourikis, and S.~I. Roumeliotis, ``Analysis and improvement
  of the consistency of extended kalman filter based slam,'' in \emph{Robotics
  and Automation, 2008. ICRA 2008. IEEE International Conference on}, May 2008,
  pp. 473--479.

\bibitem{Andrade-Cetto-2004}
J.~Andrade-Cetto and A.~Sanfeliu, ``The effects of partial observability in
  slam,'' in \emph{Robotics and Automation, 2004. Proceedings. ICRA '04. 2004
  IEEE International Conference on}, vol.~1, April 2004, pp. 397--402 Vol.1.

\bibitem{Lee2006}
K.~W. Lee, W.~S. Wijesoma, and J.~I. Guzman, ``On the observability and
  observability analysis of slam,'' in \emph{2006 IEEE/RSJ International
  Conference on Intelligent Robots and Systems}, Oct 2006, pp. 3569--3574.

\bibitem{GPHuang-consistencyIJRR}
G.~P. Huang, A.~I. Mourikis, and S.~I. Roumeliotis, ``Observability-based rules
  for designing consistent ekf slam estimators,'' \emph{The International
  Journal of Robotics Research}, vol.~29, no.~5, pp. 502--528, 2010.

\bibitem{Hesch-consistencyTRO}
J.~A. Hesch, D.~G. Kottas, S.~L. Bowman, and S.~I. Roumeliotis, ``Consistency
  analysis and improvement of vision-aided inertial navigation,'' \emph{IEEE
  Transactions on Robotics}, vol.~30, no.~1, pp. 158--176, Feb 2014.

\bibitem{Dissa2001}
G.~Dissanayake, P.~Newman, S.~Clark, H.~F. Durrant-Whyte, and M.~Csorba, ``A
  solution to the simultaneous localization and map building (slam) problem,''
  \emph{IEEE Transactions on Robotics and Automation}, vol.~17, no.~3, pp.
  229--241, Jun 2001.

\bibitem{Accuracy-SLAM-RSS2006}
A.~I. Mourikis and S.~I. Roumeliotis, ``Analytical characterization of the
  accuracy of slam without absolute orientation measurements.'' in
  \emph{Robotics: Science and systems}, 2006, pp. 215--222.

\bibitem{forster2015imu}
C.~Forster, L.~Carlone, F.~Dellaert, and D.~Scaramuzza, ``On-manifold
  preintegration for real-time visual--inertial odometry,'' \emph{IEEE
  Transactions on Robotics}, vol.~PP, no.~99, pp. 1--21, 2016.

\bibitem{barfoot2016state}
T.~D. Barfoot and P.~T. Furgale, ``Associating uncertainty with
  three-dimensional poses for use in estimation problems,'' \emph{IEEE
  Transactions on Robotics}, vol.~30, no.~3, pp. 679--693, June 2014.

\bibitem{mahony2008nonlinear}
R.~Mahony, T.~Hamel, and J.~M. Pflimlin, ``Nonlinear complementary filters on
  the special orthogonal group,'' \emph{IEEE Transactions on Automatic
  Control}, vol.~53, no.~5, pp. 1203--1218, June 2008.

\bibitem{carlone2015quaternion}
L.~Carlone, V.~Macchia, F.~Tibaldi, and B.~Bona, ``Quaternion-based ekf-slam
  from relative pose measurements: observability analysis and applications,''
  \emph{Robotica}, vol.~33, no.~06, pp. 1250--1280, 2015.

\bibitem{CoP}
G.~Dubbelman and B.~Browning, ``Cop-slam: Closed-form online pose-chain
  optimization for visual slam,'' \emph{IEEE Transactions on Robotics},
  vol.~31, no.~5, pp. 1194--1213, Oct 2015.

\bibitem{Udo-2013}
C.~Hertzberg, R.~Wagner, U.~Frese, and L.~Schröder, ``Integrating generic
  sensor fusion algorithms with sound state representations through
  encapsulation of manifolds,'' \emph{Information Fusion}, vol.~14, no.~1, pp.
  57 -- 77, 2013.

\bibitem{Bonnabel2002}
N.~Aghannan and P.~Rouchon, ``On invariant asymptotic observers,'' in
  \emph{Proceedings of the 41st IEEE Conference on Decision and Control,
  2002.}, vol.~2, Dec 2002, pp. 1479--1484.

\bibitem{Bonnabel2012}
S.~Bonnabel, ``Symmetries in observer design: Review of some recent results and
  applications to ekf-based slam,'' in \emph{Robot Motion and Control
  2011}.\hskip 1em plus 0.5em minus 0.4em\relax Springer London, 2012, pp.
  3--15.

\bibitem{BarrauB15}
A.~Barrau and S.~Bonnabel, ``An {EKF-SLAM} algorithm with consistency
  properties,'' \emph{CoRR}, vol. abs/1510.06263, 2015.

\bibitem{absil2007trust}
P.-A. Absil, C.~G. Baker, and K.~A. Gallivan, ``Trust-region methods on
  riemannian manifolds,'' \emph{Foundations of Computational Mathematics},
  vol.~7, no.~3, pp. 303--330, 2007.

\bibitem{hermann1977nonlinear}
R.~Hermann and A.~Krener, ``Nonlinear controllability and observability,''
  \emph{IEEE Transactions on Automatic Control}, vol.~22, no.~5, pp. 728--740,
  Oct 1977.

\bibitem{hesch2013camera}
J.~A. Hesch, D.~G. Kottas, S.~L. Bowman, and S.~I. Roumeliotis,
  ``Camera-imu-based localization: Observability analysis and consistency
  improvement,'' \emph{The International Journal of Robotics Research},
  vol.~33, no.~1, pp. 182--201, 2014.

\end{thebibliography}


\end{document}